\documentclass{article}
\usepackage{templates/iclr2027/iclr2027_conference,times}

\usepackage[utf8]{inputenc}
\usepackage[T1]{fontenc}
\usepackage{courier}
\usepackage{hyperref}
\hypersetup{
  hidelinks,
  hypertexnames=false,
  pdftitle={Block Parallelism for Efficient Distributed Long-Context Diffusion Language Model Training},
  pdfauthor={Tarun Suresh, Pranshu Chaturvedi, Hangoo Kang, Parth Shroff, Ishan S. Khare, Hermann Kumbong, Azalia Mirhoseini}
}
\usepackage{url}
\usepackage{booktabs}
\usepackage{amsmath}
\usepackage{amssymb}
\usepackage{amsthm}
\usepackage{algorithm}
\usepackage{algpseudocode}
\usepackage{listings}
\usepackage{float}
\usepackage{microtype}
\usepackage{xcolor}
\usepackage{graphicx}
\usepackage{tikz}
\usetikzlibrary{arrows.meta,calc}

\newtheorem{proposition}{Proposition}
\algrenewcommand\algorithmiccomment[1]{\hfill\textit{(#1)}}

\definecolor{codebg}{RGB}{247,247,247}
\definecolor{codekeyword}{RGB}{196,0,118}
\definecolor{codecomment}{RGB}{25,145,45}
\definecolor{codebuiltin}{RGB}{30,80,180}
\definecolor{codestring}{RGB}{125,45,150}
\definecolor{codenumber}{RGB}{125,125,125}
\lstdefinestyle{implementation}{
  basicstyle=\ttfamily\scriptsize,
  backgroundcolor=\color{codebg},
  keywordstyle=\color{codekeyword}\bfseries,
  commentstyle=\color{codecomment}\itshape,
  stringstyle=\color{codestring},
  emph={cuda_grid_2d,isfinite,exp,log,max,arange,empty_outputs,launch,async_all_gather,async_reduce_scatter,block_diagonal_attention,block_diagonal_backward,view_all_gathered_clean_kv,block_causal_mask,arrange_for_reduce_scatter,masked_clean_attention,masked_clean_backward,masked_attention_backward,pack_local_clean_kv,split_local_clean_gradients,supports_packed_backward,merge_lse_kernel,normalize_output_kernel},
  emphstyle=\color{codebuiltin},
  numbers=left,
  numberstyle=\tiny\color{codenumber},
  numbersep=8pt,
  stepnumber=1,
  columns=fullflexible,
  keepspaces=true,
  showstringspaces=false,
  frame=none,
  xleftmargin=1.8em,
  xrightmargin=0.4em,
  aboveskip=0.6em,
  belowskip=0.4em,
  breaklines=true,
  breakatwhitespace=true,
  captionpos=b
}

\title{\raggedright\fontsize{16}{19}\selectfont
Block Parallelism for Efficient Distributed\\
Long-Context Diffusion Language Model Training}
\author{\bfseries%
Tarun Suresh\thanks{Equal contribution. Correspondence to
\texttt{\{tsuresh@stanford.edu, pranshu@cs.stanford.edu,
hangook@stanford.edu\}}.},\hspace{0.8em}
Pranshu Chaturvedi\footnotemark[1],\hspace{0.8em}
Hangoo Kang\footnotemark[1],\hspace{0.8em}Parth Shroff,\ Ishan S. Khare, \\
\bfseries Hermann Kumbong,\ Azalia Mirhoseini \\[0.5ex]
\normalfont Stanford University
}

\iclrfinalcopy

\newcommand{\preprintfrontierfigure}{%
\begin{figure}[H]
  \centering
  \vspace{-0.4\baselineskip}
  \includegraphics[width=\linewidth]{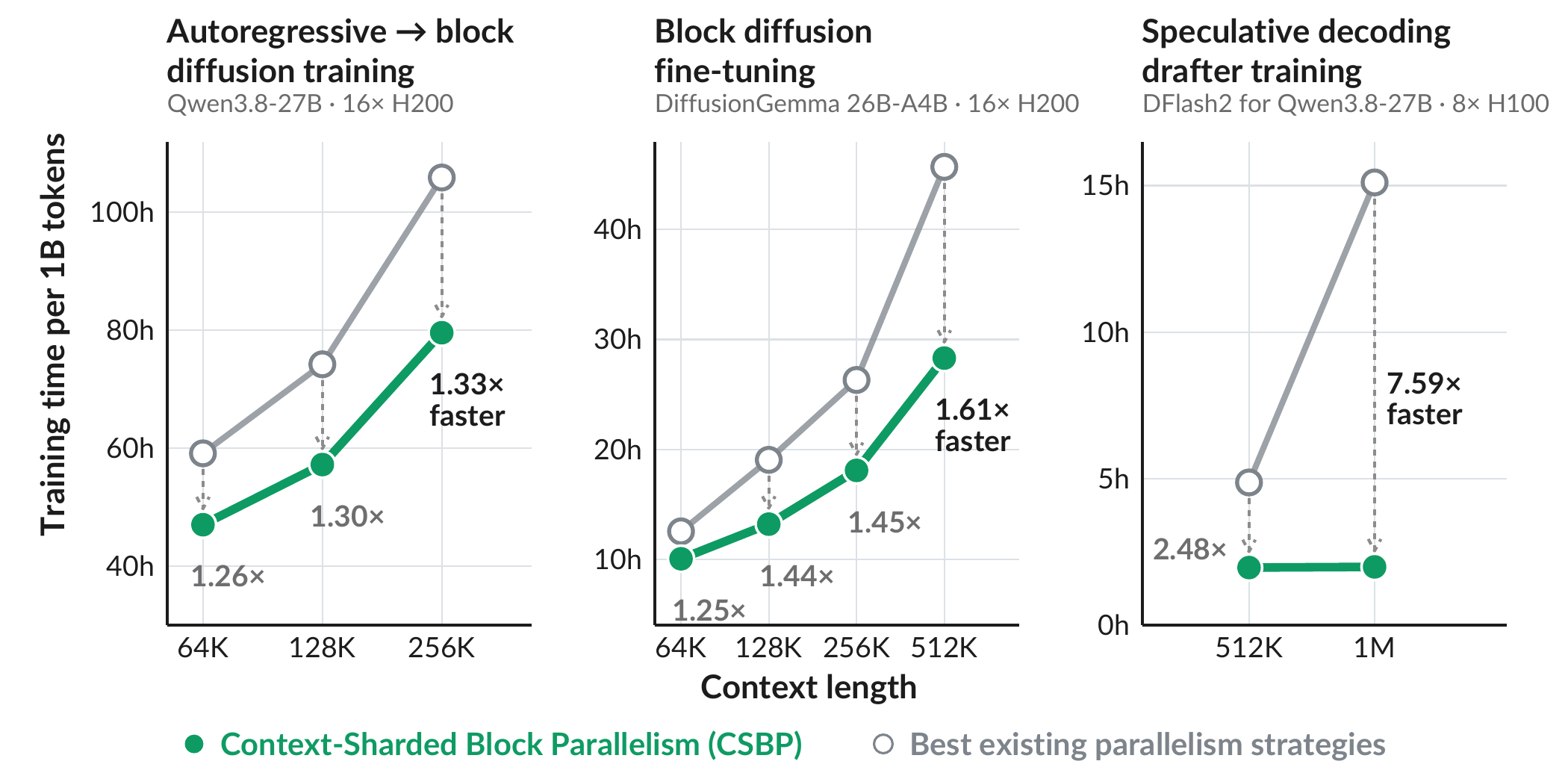}
  \vspace{-1.3\baselineskip}
  \caption{CSBP accelerates the frontier of long-context block diffusion training.}
  \label{fig:csbp-frontier}
\end{figure}%
}

\begin{document}
\maketitle
\lhead{Preprint. Under review.}

\begin{abstract}
Block diffusion language models (BDLMs) combine autoregressive dependencies
across blocks with parallel denoising within blocks, but long-context training
is constrained by distributed attention communication and activation memory.
Conventional context parallelism (CP) shards the combined clean-plus-corrupted
sequence by position, communicating shared clean K/V together with
block-specific corrupted K/V and their gradients. We observe that the BDLM
objective separates over target blocks. We introduce block parallelism (BP), a
new distributed parallelism dimension that assigns each corrupted-block
computation to one rank. To scale BP to long contexts, we introduce
context-sharded block parallelism (CSBP), which also shards the shared clean
sequence across those ranks. CSBP keeps corrupted K/V and
gradients local, avoids replicated clean prefixes, and preserves BDLM training
semantics.
On 16 H200 GPUs at 256K context, CSBP improves
throughput over the best baseline by
\textbf{1.18--1.45$\boldsymbol{\times}$} for supervised fine-tuning and
\textbf{1.27--1.33$\boldsymbol{\times}$} for
conversion of autoregressive models to BDLMs, while matching or reducing peak HBM.
Full-model speedup reaches \textbf{1.61$\boldsymbol{\times}$} at 512K. On eight H100 GPUs,
CSBP accelerates DFlash2 speculative-decoder training by
\textbf{2.48$\boldsymbol{\times}$} at 512K and
\textbf{7.59$\boldsymbol{\times}$} at 1M. In matched 12-hour DiffusionGemma
26B-A4B SFT runs, CSBP achieves higher pass rates at every trained checkpoint
on SWE-bench Verified and Terminal-Bench Lite.
The code is available at \url{https://github.com/ScalingIntelligence/Turbo-dLLM}.
\end{abstract}

\section{Introduction}
\label{sec:introduction}

\ifdefined\preprintfrontierfigure
\preprintfrontierfigure
\fi

Block diffusion language models, or BDLMs, offer a promising approach to
combining strong language-model quality with efficient inference. They retain
autoregressive dependencies across consecutive token blocks while generating
the tokens within each block through parallel denoising. This combines the
causal structure of autoregressive models with the parallelism of
diffusion-based generation
\citep{arriola2025blockdiffusion,odonoghue2026diffusiongemma,fu2026nemotronlabsdiffusion}.

Coding agents and other long-running assistants accumulate interaction
histories~\citep{yang2024sweagent,yao2023react}, motivating long-context BDLM
training.
For these workloads, activation memory and long-context attention are major
bottlenecks. Existing long-context training systems commonly use context
parallelism, which partitions sequence activations across devices so that each
device stores and processes only part of the context. During attention, devices
exchange information across these partitions, allowing local queries to
incorporate relevant information from the full
sequence~\citep{liu2023ringattention,jacobs2023ulysses}.

Conventional context parallelism (CP) is poorly matched to BDLM training
because it partitions the combined clean-plus-corrupted representation by
token position and performs distributed attention across those position
shards. Clean K/V from earlier blocks can condition multiple later blocks,
whereas corrupted K/V for block $b$ are used only by queries from that block.
At every layer, CP therefore exchanges both clean and corrupted K/V during
forward and their corresponding gradients during backward. The corrupted K/V
consume cross-rank bandwidth despite having no reuse across blocks
(Figure~\ref{fig:block-parallel-overview}(a)).

\begin{figure}[t]
    \centering
    \resizebox{\linewidth}{!}{\definecolor{BPBlue}{HTML}{356F9F}
\definecolor{BPOrange}{HTML}{B66718}

\begin{tikzpicture}[
    x=1cm,
    y=1cm,
    font=\sffamily\tiny,
    panel/.style={draw=black!24, rounded corners=2.2pt, line width=0.6pt,
                  minimum width=4.60cm, minimum height=4.35cm},
    clean/.style={draw=BPBlue, fill=BPBlue!13, rounded corners=1.2pt,
                  line width=0.65pt, minimum height=0.48cm,
                  inner xsep=0.05cm, inner ysep=0.03cm, align=center},
    corrupt/.style={draw=BPOrange, fill=BPOrange!15, rounded corners=1.2pt,
                    dash pattern=on 2.3pt off 1.2pt, line width=0.72pt,
                    minimum height=0.48cm, inner xsep=0.05cm,
                    inner ysep=0.03cm, align=center},
    ranklabel/.style={draw=black!42, fill=black!4, rounded corners=1.5pt,
                     line width=0.55pt, minimum width=0.86cm,
                     minimum height=0.48cm, inner xsep=0.05cm,
                     inner ysep=0.03cm, align=center,
                     font=\sffamily\tiny\bfseries},
    rankframe/.style={draw=black!48, fill=black!2, rounded corners=1.8pt,
                     line width=0.62pt, minimum width=1.28cm,
                     minimum height=2.20cm},
    summary/.style={draw=black!32, fill=black!3, rounded corners=1.7pt,
                    line width=0.55pt, minimum width=4.12cm,
                    minimum height=0.82cm, text width=3.88cm,
                    inner xsep=0.08cm, inner ysep=0.04cm, align=center}
]

\node[panel] at (2.40,2.18) {};
\node[panel] at (7.20,2.18) {};
\node[panel] at (12.00,2.18) {};

\node[font=\sffamily\scriptsize\bfseries, text=black!82]
      at (2.40,4.08) {(a) CP: token-wise sharding};
\node[font=\sffamily\scriptsize\bfseries, text=black!82]
      at (7.20,4.08) {(b) BP: block ownership};
\node[font=\sffamily\scriptsize\bfseries, text=black!82, align=center]
      at (12.00,4.02) {(c) CSBP: sharded context\\+ block ownership};

\foreach \x/\r in {0.90/1,2.40/2,3.90/3} {
    \node[rankframe] at (\x,2.18) {};
    \node[font=\sffamily\tiny\bfseries, text=black!76]
          at (\x,3.00) {Rank \r};
    \node[text=BPBlue!88!black] at (\x,2.76) {Clean K/V};
    \node[text=BPOrange!90!black, align=center]
          at (\x,1.99) {Corrupted\\K/V};
    \foreach \i in {1,2,3} {
        \pgfmathtruncatemacro{\tokenpos}{3*(\r-1)+\i}
        \node[draw=BPBlue, fill=BPBlue!13, line width=0.65pt,
              minimum width=0.30cm, minimum height=0.40cm, inner sep=0pt]
              at ({\x+0.34*(\i-2)},2.46) {\tokenpos};
        \node[draw=BPOrange, fill=BPOrange!15, line width=0.65pt,
              minimum width=0.30cm, minimum height=0.40cm, inner sep=0pt]
              at ({\x+0.34*(\i-2)},1.57) {\tokenpos};
    }
}
\foreach \x in {1.47,2.97} {
    \draw[<->, >=stealth, draw=BPBlue, line width=0.8pt]
          (\x,2.46) -- (\x+0.36,2.46);
    \draw[<->, >=stealth, draw=BPOrange, line width=0.8pt]
          (\x,1.57) -- (\x+0.36,1.57);
}
\node[font=\sffamily\tiny\bfseries, text=black!72]
      at (2.40,3.46) {Small cells denote token positions};
\node[summary, font=\sffamily\tiny\bfseries, text=black!72]
      at (2.40,0.50) {Clean and corrupted K/V\\cross ranks during attention};

\node[ranklabel] at (5.76,2.94) {Rank 1};
\node[ranklabel] at (5.76,2.23) {Rank 2};
\node[ranklabel] at (5.76,1.52) {Rank 3};

\node[corrupt, minimum width=0.86cm, font=\sffamily\tiny\bfseries]
      at (6.72,2.94) {Block 1};

\node[clean, minimum width=0.86cm] at (6.72,2.23) {Clean 1};
\node[corrupt, minimum width=0.86cm, font=\sffamily\tiny\bfseries]
      at (7.68,2.23) {Block 2};

\node[clean, minimum width=0.86cm] at (6.72,1.52) {Clean 1};
\node[clean, minimum width=0.86cm] at (7.68,1.52) {Clean 2};
\node[corrupt, minimum width=0.86cm, font=\sffamily\tiny\bfseries]
      at (8.64,1.52) {Block 3};

\node[font=\sffamily\tiny\bfseries, text=BPOrange!90!black]
      at (7.20,3.46) {Each target block has one owner};
\node[font=\sffamily\tiny, text=BPBlue!88!black]
      at (7.20,1.14) {Overlapping clean prefixes are replicated};
\node[summary, draw=BPOrange!60, fill=BPOrange!5,
      font=\sffamily\tiny\bfseries, text=black!72]
      at (7.20,0.50) {Corrupted K/V stay local;\\clean prefixes are replicated};

\foreach \x/\r in {10.50/1,12.00/2,13.50/3} {
    \node[rankframe] at (\x,2.18) {};
    \node[font=\sffamily\tiny\bfseries, text=black!76]
          at (\x,3.00) {Rank \r};
    \node[text=BPBlue!88!black] at (\x,2.76) {Clean K/V};
    \foreach \i in {1,2,3} {
        \pgfmathtruncatemacro{\tokenpos}{3*(\r-1)+\i}
        \node[draw=BPBlue, fill=BPBlue!13, line width=0.65pt,
              minimum width=0.30cm, minimum height=0.40cm, inner sep=0pt]
              at ({\x+0.34*(\i-2)},2.46) {\tokenpos};
    }
    \node[corrupt, minimum width=1.10cm, minimum height=0.68cm,
          text width=0.94cm, inner xsep=0.04cm,
          font=\sffamily\tiny\bfseries] at (\x,1.57)
          {Complete\\block \r};
}

\foreach \x in {11.07,12.57} {
    \draw[<->, >=stealth, draw=BPBlue, line width=0.8pt]
          (\x,2.46) -- (\x+0.36,2.46);
}

\node[font=\sffamily\tiny\bfseries, text=BPBlue!88!black]
      at (12.00,3.46) {Shared clean sequence is token-sharded};
\node[summary, draw=BPOrange!60, fill=BPOrange!5,
      font=\sffamily\tiny\bfseries, text=black!72]
      at (12.00,0.50) {Only shared clean K/V\\cross ranks during attention};
\end{tikzpicture}}
    \caption{(a) Conventional CP shards by token position and exchanges both clean and corrupted K/V across ranks,
    despite corrupted K/V having no reuse across blocks. (b) BP keeps
    corrupted K/V local to each block's owner, but replicates overlapping
    clean prefixes. (c) CSBP also shards the shared clean context, so only
    clean K/V and their gradients cross ranks during attention.
    Figure~\ref{fig:csbp} details the resulting tensor communication.}
    \label{fig:block-parallel-overview}
\end{figure}

Our first insight is that we can eliminate this block-specific communication by
exploiting the BDLM objective's separability across corrupted blocks. We
introduce block parallelism (BP), which makes target-block computation a new
distributed parallelism dimension. BP assigns each complete corrupted-block
computation to one rank. Different ranks execute different block computations
concurrently while each block's corrupted Q/K/V, activations, and gradients
remain local (Figure~\ref{fig:block-parallel-overview}(b)).

BP alone replicates increasingly long and overlapping clean prefixes across
ranks. To scale BP to long contexts, we introduce context-sharded
block parallelism (CSBP): CP shards the shared clean sequence across the same
ranks that own complete corrupted-block computations. Each clean K/V shard
serves all local blocks whose prefixes include it, and its gradients accumulate
on its owner. CSBP thus distributes clean computation and storage while keeping
corrupted K/V and their gradients local
(Figure~\ref{fig:block-parallel-overview}(c)).

We make four contributions.

\begin{itemize}
    \item \textbf{Block parallelism.} We introduce target-block computation as
    a new distributed parallelism dimension for BDLM training. BP uses
    per-block loss separability to assign each complete corrupted-block
    computation, including its forward pass, loss, and backward pass, to one
    rank. Different blocks execute concurrently while their corrupted K/V and
    gradients remain local, eliminating block-specific cross-rank communication.

    \item \textbf{Context-sharded block parallelism.} We introduce CSBP to
    scale block parallelism to long contexts without replicating the clean
    sequence. Over the same ranks, BP assigns complete corrupted-block
    computations while CP shards the shared clean-sequence computation. Each
    clean K/V shard serves every block whose prefix contains it, and its owner
    accumulates the backward contributions from all such blocks. CSBP removes
    corrupted K/V from cross-rank attention communication and avoids
    replicating overlapping clean prefixes.

    \item \textbf{Efficient long-context training.} Across six models,
    CSBP delivers $256$K-context speedups of
    \textbf{1.18--1.45$\boldsymbol{\times}$} for SFT and
    \textbf{1.27--1.33$\boldsymbol{\times}$} for AR-to-BDLM conversion over the best baselines,
    while matching or reducing peak HBM (Table~\ref{tab:main-256k-results}).
    The advantage grows with CP/BP degree and context length
    (Figures~\ref{fig:parallel-degree-scaling}--\ref{fig:sequence-length-scaling}):
    at 512K, it reaches \textbf{1.19$\boldsymbol{\times}$} on
    NemotronDiffusion 14B and \textbf{1.61$\boldsymbol{\times}$} on
    DiffusionGemma 26B-A4B.

    \item \textbf{Faster speculative decoding training and stronger agent performance.}
    CSBP accelerates DFlash2 drafter training by
    \textbf{2.48$\boldsymbol{\times}$} at 512K and
    \textbf{7.59$\boldsymbol{\times}$} at 1M. Under matched 12-hour
    DiffusionGemma SFT budgets, CSBP achieves higher pass rates at every trained
    checkpoint on SWE-bench Verified and Terminal-Bench Lite.
\end{itemize}

\section{Related Work}
\label{sec:related-work}

\paragraph{Long-context training parallelism.}
Megatron sequence parallelism reduces activation replication alongside
tensor parallelism~\citep{korthikanti2022recomputation}.
DeepSpeed-Ulysses redistributes sequence shards across attention
heads~\citep{jacobs2023ulysses}, while Ring Attention overlaps K/V
communication with attention computation~\citep{liu2023ringattention}.
Striped Attention balances causal attention workloads, and LoongTrain
combines head and context parallelism~\citep{brandon2023striped,gu2024loongtrain}.

\paragraph{Block diffusion and AR-to-BDLM adaptation.}
Block Diffusion combines autoregressive dependencies between blocks with
parallel denoising within blocks, supporting variable-length generation
and clean-prefix K/V caching~\citep{arriola2025blockdiffusion}.
Fast-dLLM~v2 converts AR checkpoints using block-structured attention,
complementary masking, and shifted prediction~\citep{wu2025fastdllmv2}.
DiffusionGemma adapts an AR model through denoising SFT followed by sampler
distillation and reinforcement learning~\citep{diffusiongemma2026report}.

\paragraph{Diffusion-based speculative decoding.}
Speculative decoding verifies draft tokens in parallel while preserving
the AR target distribution~\citep{leviathan2023speculative}.
DFlash uses target-model features to draft an entire block in one
pass~\citep{chen2026dflash}. DFlash2 adds block-local dynamic convolutions
and candidate-path selection to improve draft
quality~\citep{inco2026dflash2}.
CSBP instead accelerates block-separable training by separating shared
context from block-local computation, preserving training objectives and
inference protocols.

\section{Background}
\label{sec:background}

\subsection{Notation}
Let a sequence $x=(x_1,\ldots,x_L)$ be divided into $B$ contiguous blocks of
$M=L/B$ tokens.
\begin{equation}
    x=\left(x^{(1)},\ldots,x^{(B)}\right),
    \qquad
    x^{(<b)}=\left(x^{(1)},\ldots,x^{(b-1)}\right).
\end{equation}
We use $b$ to index blocks and $r$ to index the $P$ ranks in a parallel group.
At each transformer layer~\citep{vaswani2017attention}, $Q$, $K$, and $V$ denote the query, key, and value
projections; semicolons inside brackets denote sequence concatenation.

\subsection{Block diffusion language models}

A block diffusion language model (BDLM) generates blocks from left to right
while denoising the tokens within each block in parallel
\citep{arriola2025blockdiffusion}.
\begin{equation}
    p_\theta(x)
    =
    \prod_{b=1}^{B}
    p_\theta\!\left(x^{(b)}\mid x^{(<b)}\right),
    \label{eq:bdlm-factorization}
\end{equation}
where each conditional distribution is modeled by a diffusion process.

Training samples continuous times $t_b$, not discrete timestep indices:
standard BDLM uses uniform stratification over $[10^{-3},1)$ across batch/block
pairs; Fast-dLLM~v2 uses unstratified $\mathcal{U}[0,1)$ draws.
Each target block forms a corrupted copy $x_{t_b}^{(b)}$ conditioned on its
clean prefix $x^{(<b)}$. The training objective is
\begin{equation}
    \mathcal{L}_{\mathrm{BDLM}}
    =
    \sum_{b=1}^{B}
        \mathbb{E}_{t_b,\,x_{t_b}^{(b)}}\!\left[
            -w(t_b)\log p_\theta\!\left(
                x^{(b)} \mid x_{t_b}^{(b)},x^{(<b)}
            \right)
        \right].
    \label{eq:bdlm-training-objective}
\end{equation}
The expectation is over $t_b$ and
$x_{t_b}^{(b)}\sim q_{t_b}(\cdot\mid x^{(b)})$, where $q_{t_b}$ is the
corruption process and $w(t_b)$ is a noise-dependent loss weight. We denote
the $b$th summand, including token normalization, by $\mathcal{L}_b$. Training
observes the complete sequence and uses the clean ground-truth prefix for every
target block.

The standard BDLM training computation evaluates all $B$ block losses together
by placing a length-$L$ corrupted copy alongside a length-$L$ clean copy,
forming a length-$2L$ representation. It contains a block-specific corrupted
copy and a reusable clean copy of every token. A block-causal attention mask
gives the clean tokens in block $b$ access to the preceding clean blocks and
causal access within block $b$. The corrupted tokens in target block $b$
attend bidirectionally within their own corrupted block and attend to the clean
prefix $x^{(<b)}$, but do not use corrupted K/V from other blocks. Their
visible keys and values are therefore
$[K_b;K_{<b}^{\mathrm{clean}}]$ and
$[V_b;V_{<b}^{\mathrm{clean}}]$.

The corrupted activations of block $b$ affect only $\mathcal{L}_b$, while its
clean representation can be reused by multiple later block losses. Corrupted
computation is therefore block-specific, while clean computation is shared
across block losses. All block losses share the model parameters.

\subsection{Context parallelism}
\label{sec:background-cp}
Context parallelism (CP) uses token position as its ownership dimension,
partitioning the attention input across $P$ ranks so each rank stores
approximately a $1/P$ fraction of its token activations while the group jointly
evaluates attention~\citep{liu2023ringattention,jacobs2023ulysses}. BDLM
training supplies the length-$2L$ clean-plus-corrupted representation as this
attention input, so conventional CP partitions both copies by token position.

Let $Q_r$, $K_r$, and $V_r$ denote the query, key, and value projections
initially owned by rank $r$, and let $d_h$ be the attention-head dimension.
Writing $K=[K_1;\ldots;K_P]$ and $V=[V_1;\ldots;V_P]$, the attention output
for the queries on rank $r$ is
\begin{equation}
O_r
=
\operatorname{softmax}\!\left(
    \frac{Q_rK^\top}{\sqrt{d_h}} + \mathcal{M}_r
\right)V,
\label{eq:context-parallel-attention}
\end{equation}
where $\mathcal{M}_r$ contains the attention-mask rows for the queries on rank
$r$.

Causal attention produces uneven work across sequence shards because later
queries see longer prefixes. Zigzag partitioning pairs early and late chunks
on each rank to balance their attention work~\citep{dubey2024llama3}, while
retaining the sequence shard as the unit of CP ownership.

\section{Block Parallelism}
\label{sec:method}

\subsection{Limitations of conventional context parallelism}
\label{sec:cp-limitations}

Conventional CP is poorly matched to BDLM training because it partitions the
entire length-$2L$ clean-plus-corrupted representation by token position and
performs distributed attention across those position shards. Both clean and
corrupted K/V therefore participate in cross-rank attention communication.
The corrupted K/V for block $b$ are used only by queries from that block. CP
exchanges these block-specific K/V during forward and returns their gradients
to the position-shard owners during backward, adding communication at every
layer for data used by only one block.

Clean K/V are shared across later block losses. When
$\mathcal{L}_b$ uses a clean K/V shard, attention backward produces a gradient
contribution for that shard. This contribution is accumulated on the owning
rank, where backward continues through the clean projections and earlier
layers. Conventional CP therefore combines the required clean-prefix
communication with block-specific corrupted K/V and gradient communication at
every layer.

\subsection{Block-parallel execution}
\label{sec:block-parallel-training}

Our first insight is that we can eliminate this block-specific communication by
exploiting the BDLM objective's per-block loss separability to expose
target-block computation as a new distributed parallelism dimension. The
corrupted activations of block $b$ contribute only to $\mathcal{L}_b$, while
the clean representations of its observed prefix can contribute to multiple
block losses. BP assigns each complete corrupted-block computation to one rank.
The assigned rank executes the corrupted forward pass, loss, and backward pass
for block $b$ while keeping the block's corrupted Q/K/V, activations, and
gradients local. Different ranks execute different block computations
concurrently (Figure~\ref{fig:block-parallel-overview}(b)).

Let $\mathcal{B}_r$ denote the blocks assigned to rank $r$. Rank $r$ computes
\begin{equation}
    \mathcal{L}_r
    =
    \sum_{b\in\mathcal{B}_r}
    \mathcal{L}_b.
    \label{eq:local-block-loss}
\end{equation}
Linearity of differentiation gives the global gradient identity
\begin{equation}
    \sum_{r=1}^{P}\nabla_\theta\mathcal{L}_r
    =
    \sum_{b=1}^{B}\nabla_\theta\mathcal{L}_b
    =
    \nabla_\theta\mathcal{L}_{\mathrm{BDLM}}.
    \label{eq:block-gradient-sum}
\end{equation}

\subsection{Context-sharded block parallelism}
\label{sec:context-sharded-block-parallelism}

BP keeps corrupted-block computations local, but BP alone replicates clean
prefixes across ranks. A rank can reuse the longest prefix required by its
assigned blocks, but these prefixes overlap across ranks, and ranks assigned
later blocks may compute and store nearly the full clean sequence. Clean
computation and activations can therefore be replicated across up to
$\min(B,P)$ active ranks.

Our second insight is that we can scale BP without this replication because BP
and CP operate on complementary parts of the computation over the same $P$
ranks. BP assigns complete corrupted-block computations while CP shards the
clean-sequence computation shared by those blocks. We call this
context-sharded block parallelism (CSBP). Each rank executes a subset of
complete corrupted-block computations and stores one clean-sequence shard
(Figure~\ref{fig:csbp}). Context parallelism supplies the distributed clean
K/V and accumulates their backward contributions, while BP keeps each
corrupted-block computation on its owner.

\begin{figure}[t]
    \centering
    \resizebox{\linewidth}{!}{\begin{tikzpicture}[
    x=1cm,
    y=1cm,
    font=\sffamily\scriptsize,
    >={Latex[length=2.2mm,width=1.5mm]},
    device/.style={draw=black!58, fill=black!2, rounded corners=2.3pt,
                   line width=0.7pt, minimum width=2.64cm,
                   minimum height=2.16cm},
    ownerdevice/.style={draw=BPOrange, fill=BPOrange!4,
                       rounded corners=2.3pt, line width=1.05pt,
                       minimum width=2.64cm, minimum height=2.16cm},
    clean/.style={draw=BPBlue, fill=BPBlue!13, rounded corners=1.4pt,
                  line width=0.7pt, align=center},
    corrupt/.style={draw=BPOrange, fill=BPOrange!16, rounded corners=1.4pt,
                    dash pattern=on 2.4pt off 1.2pt, line width=0.75pt,
                    align=center},
    shard/.style={minimum width=1.10cm, minimum height=0.58cm,
                  text width=1.00cm, inner xsep=0pt, align=center},
    blockjob/.style={minimum width=2.16cm, minimum height=0.78cm,
                     text width=2.06cm, inner xsep=0pt, align=center},
    callout/.style={draw=black!38, fill=black!3, rounded corners=2.3pt,
                    line width=0.65pt, text width=2.90cm,
                    minimum width=3.18cm, minimum height=1.14cm,
                    inner xsep=0.14cm, inner ysep=0.10cm,
                    align=center},
    fwdclean/.style={draw=BPBlue, line width=1.0pt,
                     -{Latex[length=2.4mm,width=1.6mm]}},
    fwdcorrupt/.style={draw=BPOrange, line width=1.0pt,
                       -{Latex[length=2.4mm,width=1.6mm]}},
    bwdclean/.style={draw=BPBlue, line width=0.95pt,
                     dash pattern=on 3pt off 1.6pt,
                     -{Latex[length=2.4mm,width=1.6mm]}},
    bwdcorrupt/.style={draw=BPOrange, line width=0.95pt,
                       dash pattern=on 3pt off 1.6pt,
                       -{Latex[length=2.4mm,width=1.6mm]}},
    groupclean/.style={draw=BPBlue, line width=1.0pt,
                       {Latex[length=2.4mm,width=1.6mm]}-{Latex[length=2.4mm,width=1.6mm]}},
    groupcorrupt/.style={draw=BPOrange, line width=1.0pt,
                         {Latex[length=2.4mm,width=1.6mm]}-{Latex[length=2.4mm,width=1.6mm]}},
    groupgradclean/.style={draw=BPBlue, line width=0.95pt,
                           dash pattern=on 3pt off 1.6pt,
                           {Latex[length=2.4mm,width=1.6mm]}-{Latex[length=2.4mm,width=1.6mm]}},
    groupgradcorrupt/.style={draw=BPOrange, line width=0.95pt,
                             dash pattern=on 3pt off 1.6pt,
                             {Latex[length=2.4mm,width=1.6mm]}-{Latex[length=2.4mm,width=1.6mm]}},
    connector/.style={draw=black!30, line width=0.55pt},
    cleanconnector/.style={draw=BPBlue!65, line width=0.6pt}
]
\definecolor{BPBlue}{HTML}{356F9F}
\definecolor{BPOrange}{HTML}{B66718}

\node[clean, minimum width=0.34cm, minimum height=0.20cm,
      inner sep=0pt] at (4.02,6.02) {};
\node[anchor=west, text=black!72, font=\sffamily\tiny]
      at (4.27,6.02) {Clean tokens};
\node[draw=BPOrange, fill=BPOrange!15, minimum width=0.34cm, minimum height=0.20cm,
      inner sep=0pt] at (6.57,6.02) {};
\node[anchor=west, text=black!72, font=\sffamily\tiny]
      at (6.82,6.02) {Corrupted tokens};

\node[anchor=east, font=\sffamily\scriptsize\bfseries, text=black!82]
      at (1.85,4.28) {(a) CP};

\foreach \x/\r/\n in {3.18/{1}/1,6.18/{2}/2,9.18/{3}/3} {
    \node[device] (cpd\n) at (\x,4.28) {};
    \node[font=\sffamily\scriptsize\bfseries, text=black!76]
          at ([yshift=-0.33cm]cpd\n.north) {Rank \r};
    \node[font=\sffamily\tiny, align=center, text=BPBlue!88!black]
          at ({\x-0.57},4.62) {Clean\\$Q/K/V$};
    \node[font=\sffamily\tiny, align=center, text=BPOrange!90!black]
          at ({\x+0.57},4.62) {Corrupted\\$Q/K/V$};
    \foreach \i in {1,2,3} {
        \pgfmathtruncatemacro{\tokenpos}{3*(\n-1)+\i}
        \node[draw=BPBlue, fill=BPBlue!13, line width=0.65pt,
              minimum width=0.30cm, minimum height=0.40cm,
              inner sep=0pt, font=\sffamily\tiny]
              at ({\x-0.57+0.34*(\i-2)},4.16) {\tokenpos};
        \node[draw=BPOrange, fill=BPOrange!15, line width=0.65pt,
              minimum width=0.30cm, minimum height=0.40cm,
              inner sep=0pt, font=\sffamily\tiny]
              at ({\x+0.57+0.34*(\i-2)},4.16) {\tokenpos};
    }
    \node[font=\sffamily\tiny\bfseries, text=black!72, align=center]
          at ([yshift=0.48cm]cpd\n.south) {$Q$, $dQ$, and $O$\\[-1pt]stay local};
}

\node[anchor=east, font=\sffamily\tiny\bfseries, text=black!68]
      at (1.93,5.61) {Forward: $K/V$};
\draw[groupclean] (2.13,5.74) -- (10.23,5.74);
\draw[groupcorrupt] (2.13,5.48) -- (10.23,5.48);
\foreach \x in {3.18,6.18,9.18} {
    \draw[connector] (\x,5.36) -- (\x,5.74);
}

\node[anchor=east, font=\sffamily\tiny\bfseries, text=black!68]
      at (1.93,2.89) {Backward: $dK/dV$};
\draw[groupgradclean] (2.13,3.02) -- (10.23,3.02);
\draw[groupgradcorrupt] (2.13,2.76) -- (10.23,2.76);
\foreach \x in {3.18,6.18,9.18} {
    \draw[connector] (\x,3.20) -- (\x,2.76);
}

\node[callout, font=\sffamily\tiny] at (12.34,4.28)
      {$Q$, $dQ$, $O$ stay local\\[-1pt]
       \textbf{Cross-rank: Clean + Corrupted}\\[-1pt]
       \textbf{$K/V$ and $dK/dV$}};

\node[anchor=east, font=\sffamily\scriptsize\bfseries, text=black!82]
      at (1.85,1.28) {(b) CSBP};

\node[device, minimum height=2.26cm] (fd1) at (3.18,1.23) {};
\node[device, minimum height=2.26cm] (fd2) at (6.18,1.23) {};
\node[ownerdevice, minimum height=2.26cm] (fd3) at (9.18,1.23) {};
\foreach \r/\n in {1/1,2/2,3/3} {
    \node[font=\sffamily\scriptsize\bfseries, text=black!76]
          at ([yshift=-0.33cm]fd\n.north) {Rank \r};
    \node[font=\sffamily\tiny, text=BPBlue!88!black]
          at ([yshift=0.53cm]fd\n.center) {Clean $Q/K/V$};
    \foreach \i in {1,2,3} {
        \pgfmathtruncatemacro{\tokenpos}{3*(\n-1)+\i}
        \pgfmathsetmacro{\tokenshift}{0.34*(\i-2)}
        \node[draw=BPBlue, fill=BPBlue!13, line width=0.65pt,
              minimum width=0.30cm, minimum height=0.40cm,
              inner sep=0pt, font=\sffamily\tiny]
              at ([xshift=\tokenshift cm,yshift=0.20cm]fd\n.center)
              {\tokenpos};
    }
}
\node[corrupt, draw=BPOrange, fill=BPOrange!15, text=black!85,
      blockjob, font=\sffamily\tiny] at ([yshift=-0.59cm]fd1.center)
      {Block $1$: Corrupted\\$Q/K/V + dQ/dK/dV$ local};
\node[corrupt, draw=BPOrange, fill=BPOrange!15, text=black!85,
      blockjob, font=\sffamily\tiny] at ([yshift=-0.59cm]fd2.center)
      {Block $2$: Corrupted\\$Q/K/V + dQ/dK/dV$ local};
\node[corrupt, blockjob, font=\sffamily\tiny\bfseries]
      at ([yshift=-0.59cm]fd3.center)
      {Block $3$: Corrupted\\$Q/K/V + dQ/dK/dV$ local};

\node[anchor=east, font=\sffamily\tiny\bfseries, text=black!68]
      at (1.93,2.56) {Forward: $K/V$};
\draw[fwdclean] (3.18,2.64) -- (9.18,2.64);
\draw[fwdclean] (6.18,2.48) -- (9.18,2.48);
\draw[cleanconnector] (fd1.north) -- (3.18,2.64);
\draw[cleanconnector] (fd2.north) -- (6.18,2.64);
\draw[cleanconnector] (fd3.north) -- (9.18,2.64);

\node[anchor=east, font=\sffamily\tiny\bfseries, text=black!68]
      at (1.93,-0.08) {Backward: $dK/dV$};
\draw[bwdclean] (9.18,-0.13) -- (3.18,-0.13);
\draw[bwdclean] (9.18,0.02) -- (6.18,0.02);
\draw[cleanconnector] (fd1.south) -- (3.18,-0.13);
\draw[cleanconnector] (fd2.south) -- (6.18,0.02);
\draw[cleanconnector] (fd3.south) -- (9.18,-0.13);

\node[callout, draw=BPOrange!78, fill=BPOrange!7,
      font=\sffamily\tiny] at (12.34,1.28)
      {$Q$, $dQ$, $O$ stay local\\[-1pt]
       \textbf{Cross-rank: Clean $K/V$}\\[-1pt]
       \textbf{and Clean $dK/dV$ only}};
\end{tikzpicture}}
    \caption{Conventional CP shards the combined clean-plus-corrupted sequence
    by token position. Clean and corrupted $K/V$ cross ranks during forward,
    and clean and corrupted $dK/dV$ cross ranks during backward; $Q$, $dQ$, and
    attention outputs remain with their query owners. CSBP shards only the
    shared clean sequence, so only clean $K/V$ and $dK/dV$ cross ranks. Each
    block owner retains its corrupted $Q/K/V$ and $dQ/dK/dV$ locally.}
    \label{fig:csbp}
\end{figure}

\paragraph{Forward pass.}
The rank assigned target block $b$ computes
\begin{equation}
    O_b
    =
    \operatorname{Attn}\!\left(
        Q_b,
        [K_b;K_{<b}^{\mathrm{clean}}],
        [V_b;V_{<b}^{\mathrm{clean}}]
    \right),
    \label{eq:csbp-block-attention}
\end{equation}
where $\operatorname{Attn}$ denotes scaled dot-product attention. The rank
holds $Q_b$, $K_b$, and $V_b$ locally, while CP supplies the sharded
clean-prefix K/V. Online softmax combines the local within-block contribution
with the distributed clean-prefix
contributions~\citep{milakov2018online,dao2022flashattention}.

The target-block computations reuse overlapping prefixes of the clean
sequence. During one context-parallel attention pass, rank $r$ applies each
clean K/V shard supplied by CP to its clean queries and to every
$b\in\mathcal{B}_r$ whose prefix contains tokens from that shard. The
length-$L$ distributed clean K/V stream therefore serves every local block
whose clean prefix includes that shard. Algorithm~\ref{alg:csbp-attention}
gives the complete per-layer computation in the same notation.

\paragraph{Backward pass.}
The rank assigned block $b$ backpropagates $\mathcal{L}_b$, computing the
gradients of $Q_b$, $K_b$, and $V_b$ locally together with contributions to
the clean-prefix K/V gradients. Exact CP backward accumulates the clean K/V
gradient contributions from all clean and assigned corrupted-block queries on
each clean-shard owner, which continues backward through the clean projections
and earlier layers. Corrupted Q/K/V gradients therefore remain on block owners,
while gradients through clean activations are accumulated on clean-shard
owners. Together, these contributions give the global parameter gradient in
Equation~\ref{eq:block-gradient-sum}. Algorithm~\ref{alg:csbp-backward}
gives the corresponding backward pass. Implementations~\ref{impl:csbp-forward}
and~\ref{impl:csbp-backward} give the production forward and backward fast
paths.

Later target blocks and clean-sequence chunks have longer prefixes and require
more attention work. We devise a dual-end strategy that assigns target blocks
$b$ and $B+1-b$ to the same rank. We use the same early--late pairing for
target blocks and zigzag clean-sequence sharding over the same
ranks~\citep{dubey2024llama3}. Appendix~\ref{app:load-balancing} shows that this
common pairing exactly balances target-block and clean causal attention for
equal partitions and handles non-divisible block counts.

\subsection{Complexity and sources of improvement}
\label{sec:complexity-analysis}

CSBP preserves the exact BDLM computation and gradients
(Proposition~\ref{prop:csbp-exactness}) while eliminating cross-rank
communication for corrupted K/V and avoiding replicated clean prefixes.

\paragraph{Computation.}
Conventional CP and CSBP evaluate the same valid attention pairs, so
both require $\Theta(L^2)$ total attention work and $\Theta(L^2/P)$ work per
rank under balanced assignment. CSBP changes where this work is
executed. Each corrupted block is evaluated on one rank, while CP supplies the
shared clean K/V. The gain comes from keeping block-specific computation local
and eliminating its cross-rank communication while preserving the same
asymptotic attention work.

\paragraph{Communication.}
Conventional CP applies distributed attention to the combined length-$2L$
clean-plus-corrupted representation. CSBP distributes only the
length-$L$ clean K/V, while corrupted K/V remain on their block owners. Each
communicated clean K/V shard serves every local block query whose prefix
includes that shard. In backward, corrupted K/V gradients remain on block
owners and clean K/V gradients return to their shard owners. Cross-rank
attention communication therefore contains only shared clean K/V and their
gradients. CSBP removes the length-$L$ corrupted component from this
communication in both forward and backward.

\paragraph{Memory.}
Each CSBP rank stores one length-$L/P$ clean shard and the activations for its
assigned corrupted blocks. With balanced block assignments, each rank stores
approximately $L/P$ clean tokens and $L/P$ corrupted tokens, giving
approximately $2L/P$ token activations and matching conventional CP to first
order. Pure BP can store a clean prefix approaching length $L$ on ranks
assigned later blocks. CSBP reduces this clean-prefix component to
$L/P$, providing up to a $P\times$ reduction for a rank whose longest required
prefix approaches $L$. CSBP therefore retains CP's activation scaling
without pure BP's overlapping-prefix replication.

\section{Experiments}
\label{sec:experiments}

\subsection{Experimental setup}
\label{sec:experimental-setup}

\paragraph{\mbox{Models and workloads.}}
We evaluate three workloads. For supervised fine-tuning (SFT), we use
NemotronDiffusion 3B, 8B, and 14B~\citep{fu2026nemotronlabsdiffusion} and
DiffusionGemma 26B-A4B~\citep{odonoghue2026diffusiongemma}. For AR-to-BDLM
conversion, we use the text backbones of Qwen3.5-27B~\citep{qwen2026qwen35}
and Qwen3.8-27B~\citep{qwen2026qwen38} with the Fast-dLLM v2
objective~\citep{wu2025fastdllmv2}. Conversion provides a route to parallel
generation without training a diffusion model from scratch and tests CSBP
beyond fine-tuning existing BDLMs. We also train long-context DFlash2
drafters~\citep{inco2026dflash2} for Qwen3.8-27B and Muse-Glimmer-30B following
the SpecForge framework~\citep{li2026specforge} and open-source
implementation~\citep{specforge2025} (Appendix~\ref{app:dflash2-objective}).
Across all workloads, the best baseline and CSBP evaluate the same objective on the same
target blocks. The SFT and AR-to-BDLM workloads include every eligible target
block, while DFlash2 uses the draft blocks sampled by its training objective.

\paragraph{\mbox{Hardware.}}
Full-model training uses two nodes, each with eight NVIDIA H200 GPUs and
141\,GB of HBM3e per GPU. DFlash2 training uses one node with eight NVIDIA
H100 GPUs and 80\,GB of HBM3 per GPU.

\paragraph{\mbox{Baselines and configurations.}}
We compare against highly optimized conventional training without BP, using
the token-position sharding found in distributed-training
libraries~\citep{nvidia_megatron_cp,pytorch_torchtitan} when CP is required. We
profile-optimize baseline execution and share applicable kernel and layout
optimizations with CSBP. For each setting, the best baseline is the
highest-throughput tested feasible combination of data,
tensor~\citep{shoeybi2019megatron}, expert~\citep{lepikhin2020gshard},
context~\citep{liu2023ringattention,jacobs2023ulysses}, and sequence
parallelism~\citep{korthikanti2022recomputation} and global batch size. With
CSBP, each complete corrupted-block computation stays on one rank while the
reusable clean context remains sharded across ranks, eliminating cross-rank
transport of corrupted K/V and their gradients. We optimize topology and batch
size independently with CSBP enabled.

\paragraph{\mbox{Implementation and metrics.}}
We use PyTorch~\citep{paszke2019pytorch},
FlashAttention-4~\citep{zadouri2026flashattention4},
FlexAttention~\citep{dong2025flexattention}, and NCCL; all runs use BF16.
We implement a token-chunked linear cross-entropy loss with cuBLAS
projections and fused CUDA softmax/CE kernels, avoiding full-sequence logits
materialization.
Full-model runs use activation checkpointing~\citep{chen2016checkpointing} and
DeepSpeed ZeRO Stage~2~\citep{rajbhandari2019zero}; the
throughput-maximizing DFlash2 recipe uses fused AdamW without activation
checkpointing. We report useful model FLOPs utilization
(MFU)~\citep{chowdhery2022palm}. For DFlash2, throughput and useful MFU count
supervised draft-target tokens; computation replicated across context ranks
consumes runtime without increasing either metric. Peak HBM is the maximum
peak allocated memory across GPUs. For throughput and memory profiles, every training sequence has
exactly the reported context length. The equal-wall-clock experiments use the
variable sequence lengths of their respective datasets.

\subsection{Effectiveness of CSBP}
\label{sec:main-256k-results}

CSBP improves throughput across every evaluated model and both training
objectives at 256K context (Table~\ref{tab:main-256k-results}). It delivers
$1.18$--$1.45\times$ speedup for SFT and $1.27$--$1.33\times$ for AR-to-BDLM conversion, while matching or reducing peak HBM. The gains extend from
NemotronDiffusion to DiffusionGemma 26B-A4B and both Qwen backbones, demonstrating consistent
benefits across the evaluated model families.

\begin{table}[!htb]
  \centering
  \caption{CSBP accelerates training at 256K context.}
  \label{tab:main-256k-results}
  \small
  \setlength{\tabcolsep}{4pt}
  \renewcommand{\arraystretch}{1.35}
  \resizebox{\linewidth}{!}{%
  \begin{tabular}{@{}llllrrrr@{}}
    \toprule
    \shortstack{Training\\workload} & Model & \shortstack{Best baseline\\topology} & \shortstack{Ours\\topology} & Tok/s & Speedup & \shortstack{MFU\\(\%)} & \shortstack{Peak HBM\\(GiB)} \\
    \midrule
    BDLM fine-tuning & NemotronDiffusion 3B & DP4/TP1/CP4 & DP4/TP1/CP4/BP4 & 13{,}434 / \textbf{16{,}027} & \textbf{1.19$\times$} & 31.7 / \textbf{37.8} & 57.1 / \textbf{57.0} \\
    BDLM fine-tuning & NemotronDiffusion 8B & DP4/TP1/CP4 & DP4/TP1/CP4/BP4 & 9{,}668 / \textbf{11{,}379} & \textbf{1.18$\times$} & 32.3 / \textbf{38.0} & 97.5 / 97.5 \\
    BDLM fine-tuning & NemotronDiffusion 14B & DP2/TP1/CP8 & DP2/TP1/CP8/BP8 & 7{,}884 / \textbf{9{,}301} & \textbf{1.18$\times$} & 33.1 / \textbf{39.0} & 91.1 / \textbf{90.6} \\
    BDLM fine-tuning & DiffusionGemma 26B-A4B & DP1/TP1/EP2/CP8 & DP1/TP1/EP2/CP8/BP8 & 10{,}557 / \textbf{15{,}359} & \textbf{1.45$\times$} & 12.7 / \textbf{18.4} & 118.1 / \textbf{113.6} \\
    \midrule
    AR-to-BDLM conversion & Qwen3.8-27B & DP1/TP2/CP8 & DP1/TP2/CP8/BP8 & 2{,}623 / \textbf{3{,}490} &
    \textbf{1.33$\times$} & 20.1 / \textbf{26.8} & 120.7 / \textbf{111.1} \\
    AR-to-BDLM conversion & Qwen3.5-27B & DP1/TP2/CP8 & DP1/TP2/CP8/BP8 & 2{,}641 / \textbf{3{,}351} &
    \textbf{1.27$\times$} & 20.3 / \textbf{25.7} & 120.7 / \textbf{111.1} \\
    \bottomrule
  \end{tabular}}
  \par\vspace{2pt}
  \parbox{\linewidth}{\footnotesize\raggedright Pairs: best baseline~/~ours. Bold marks the better value.}
\end{table}

\subsection{Scaling with CP/BP degree}
\label{sec:parallel-degree-scaling}

CSBP's advantage grows as training uses more ranks to accommodate longer
contexts (Figure~\ref{fig:parallel-degree-scaling}). From degree two to eight,
speedup rises from $1.10\times$ to $1.18\times$ for NemotronDiffusion 14B and from
$1.25\times$ to $1.45\times$ for DiffusionGemma 26B-A4B.
Keeping corrupted-block computation local becomes increasingly beneficial
in this long-context regime.

\begin{figure}[!htb]
  \centering
  \includegraphics[width=0.92\linewidth]{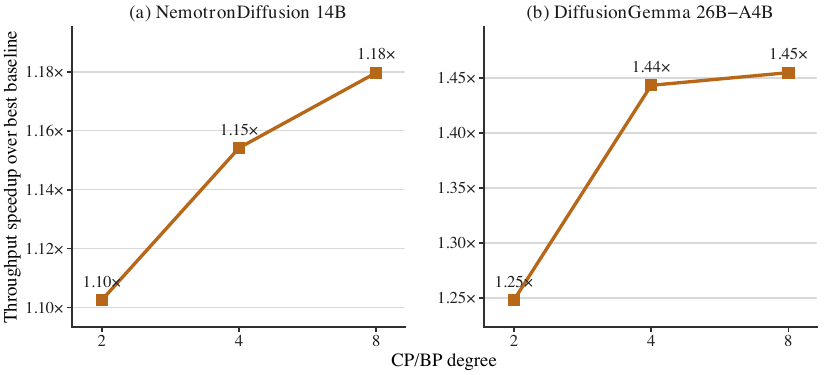}
  \caption{CSBP's throughput gains grow with CP/BP degree.}
  \label{fig:parallel-degree-scaling}
\end{figure}

\subsection{Scaling with context length}
\label{sec:sequence-length-scaling}

Longer contexts strengthen the throughput advantage of CSBP
(Figure~\ref{fig:sequence-length-scaling}). Extending context from 64K to 512K
increases speedup from $1.10\times$ to $1.19\times$ for NemotronDiffusion 14B and from
$1.25\times$ to $1.61\times$ for DiffusionGemma 26B-A4B. The gains persist through
512K, consistent with avoiding corrupted K/V communication becoming more
valuable as sequences grow.

\begin{figure}[!htb]
  \centering
  \includegraphics[width=0.8\linewidth]{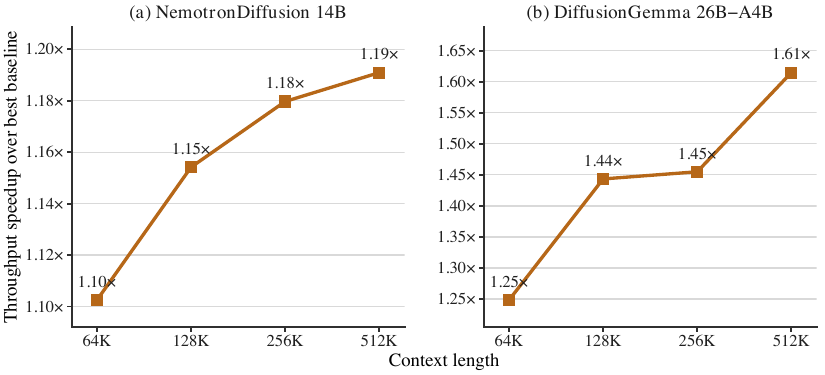}
  \caption{Longer contexts amplify CSBP's throughput advantage.}
  \label{fig:sequence-length-scaling}
\end{figure}

Context sharding is what makes these long-context gains practical.
Without it, BP replicates overlapping clean prefixes: NemotronDiffusion 14B pure BP
is slower and uses more memory at 64K, then runs out of memory at 128K;
DiffusionGemma 26B-A4B pure BP runs out of memory at both 64K and 128K under
the matched native objective.
CSBP fits and accelerates training in these cases
(Appendix~\ref{app:pure-bp-ablation}).

\subsection{Accelerating block diffusion speculative decoding drafter training}
\label{sec:dflash2-scaling}

On eight H100 GPUs (Table~\ref{tab:dflash2-h100-scaling}), CSBP produces larger
DFlash2 speedups than full-model workloads because conventional CP replicates
the block-local decoder, vocabulary loss, and selector across context ranks,
whereas CSBP assigns each draft block to one rank. At 512K, removing this two-way duplication
yields $2.48\times$ higher throughput; at 1M, four CP2/BP2 replicas versus one
CP8 baseline yield $7.59\times$. Executed-GEMM utilization (executed GEMM
FLOP/s relative to BF16 peak) is 28.7\%/37.0\% at 512K and 35.8\%/36.5\% at 1M
for baseline/CSBP. At 512K, CSBP cuts exposed attention communication from
105.4 to 7.5\,ms per GPU because each block stays on one rank, avoiding
corrupted K/V and gradient transport and explaining its higher utilization;
the near-equal 1M values show the gains reflect genuine reductions in executed
work and increased data parallelism. At 256K, where full replicas fit, DP8 remains
19.2\% faster because it avoids sequence-parallel overhead.

Muse-Glimmer-30B shows complementary gains. Against the fastest
feasible non-BP topology, CSBP is $1.56\times$, $3.35\times$, and $4.81\times$
faster at 256K, 512K, and 1M, respectively. At 512K, CP2 runs out of memory,
while CP2/BP2 fits and supports four data-parallel replicas; at 1M, CP4/BP4
both saves 12.0\% peak HBM relative to CP8 and raises throughput by
$4.81\times$.

\begin{table}[!htb]
  \centering
  \caption{CSBP accelerates long-context DFlash2 drafter training.}
  \label{tab:dflash2-h100-scaling}
  \small
  \setlength{\tabcolsep}{4pt}
  \renewcommand{\arraystretch}{1.35}
  \resizebox{\linewidth}{!}{%
  \begin{tabular}{@{}llllrrrr@{}}
    \toprule
    Model & \shortstack{Context\\length} & \shortstack{Best baseline\\topology} & \shortstack{Ours\\topology} & Tok/s & Speedup & \shortstack{MFU\\(\%)} & \shortstack{Peak HBM\\(GiB)} \\
    \midrule
    Qwen3.8-27B & 256K & DP8 & DP4/CP2/BP2
      & \textbf{167{,}228} / 140{,}350 & 0.84$\times$
      & \textbf{38.4} / 32.2
      & 63.5 / \textbf{48.3} \\
    Qwen3.8-27B & 512K & DP4/CP2 & DP4/CP2/BP2
      & 57{,}027 / \textbf{141{,}222} & \textbf{2.48$\times$}
      & 13.1 / \textbf{32.4}
      & 63.3 / \textbf{54.6} \\
    Qwen3.8-27B & 1M & DP1/CP8 & DP4/CP2/BP2
      & 18{,}389 / \textbf{139{,}541} & \textbf{7.59$\times$}
      & 4.2 / \textbf{32.0}
      & \textbf{53.7} / 67.1 \\
    \midrule
    Muse-Glimmer-30B & 256K & DP4/CP2 & DP4/CP2/BP2
      & 62{,}698 / \textbf{98{,}046} & \textbf{1.56$\times$}
      & 18.5 / \textbf{28.9}
      & 63.8 / \textbf{57.5} \\
    Muse-Glimmer-30B & 512K & DP2/CP4 & DP4/CP2/BP2
      & 29{,}493 / \textbf{98{,}785} & \textbf{3.35$\times$}
      & 8.7 / \textbf{29.1}
      & \textbf{63.8} / 65.6 \\
    Muse-Glimmer-30B & 1M & DP1/CP8 & DP2/CP4/BP4
      & 12{,}842 / \textbf{61{,}742} & \textbf{4.81$\times$}
      & 3.8 / \textbf{18.2}
      & 63.3 / \textbf{55.7} \\
    \bottomrule
  \end{tabular}}
  \par\vspace{2pt}
  \parbox{\linewidth}{\footnotesize\raggedright Pairs: best baseline~/~ours. Tok/s and MFU count supervised draft-target tokens. Bold marks the better value.}
\end{table}

\subsection{Equal-wall-clock downstream performance}
\label{sec:equal-wall-clock}

We finally test whether CSBP's higher training throughput translates into
faster downstream progress under a fixed wall-clock budget. We fine-tune
DiffusionGemma 26B-A4B with the best baseline and CSBP for 12 hours and
evaluate checkpoints every three hours on SWE-bench Verified and Terminal-Bench Lite
(Figure~\ref{fig:equal-wall-clock-downstream}). We train with LoRA adapters on
eight H100 GPUs, comparing CP8 with CSBP at CP8/BP8. Appendix~\ref{app:training-evaluation-details}
provides full data and training details. CSBP achieves a higher pass
rate at every trained checkpoint on both benchmarks. Its lead peaks at 1.8
percentage points on SWE-bench Verified and 2 points on Terminal-Bench Lite;
after 12 hours, it remains 1 point ahead on both benchmarks.

\begin{figure}[!htb]
  \centering
  \includegraphics[width=0.95\linewidth]{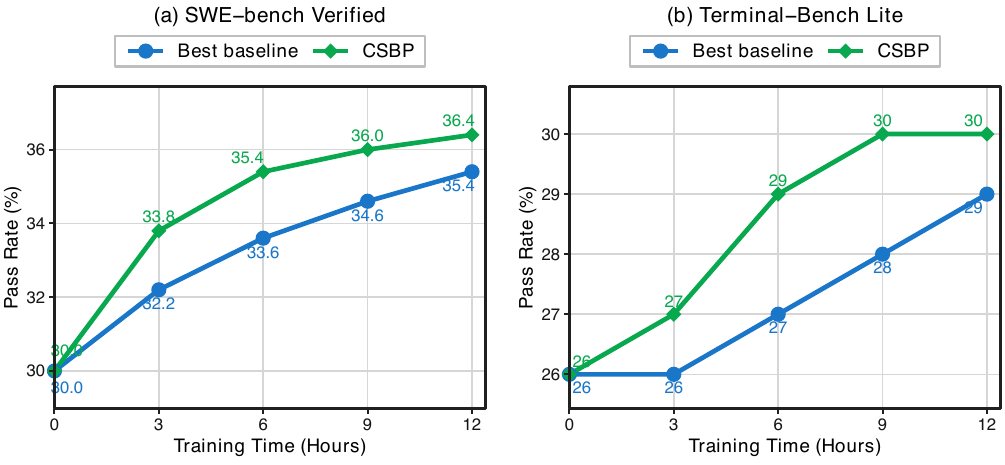}
  \caption{Equal-wall-clock downstream performance. Checkpoints from the
  best baseline and CSBP are evaluated every three hours on
  (a) SWE-bench Verified and
  (b) Terminal-Bench Lite.}
  \label{fig:equal-wall-clock-downstream}
\end{figure}

\section{Limitations}
\label{sec:limitations}
CSBP requires known training targets and accelerates training, not generation
of unknown future tokens. It targets long contexts, where attention
communication and clean-prefix replication are most costly.

\section{Conclusion}
\label{sec:conclusion}
In this work, we propose block parallelism (BP), a distributed parallelism
dimension that assigns complete corrupted-block computations to ranks.
We further propose context-sharded block parallelism (CSBP), which shards
the shared clean context across those ranks to scale training to long
contexts. By matching the parallel decomposition to the BDLM objective,
we keep corrupted K/V and gradients local, avoid clean-prefix replication,
and preserve the reference computation and gradients.

On 16 H200 GPUs at 256K context, we obtain \textbf{1.18--1.45$\boldsymbol{\times}$} higher
throughput for SFT and \textbf{1.27--1.33$\boldsymbol{\times}$} for AR-to-BDLM conversion over
the best baseline, while matching or reducing peak HBM; full-model speedup
reaches \textbf{1.61$\boldsymbol{\times}$} at 512K. On eight H100 GPUs, we accelerate DFlash2
drafter training by \textbf{2.48$\boldsymbol{\times}$} at 512K and \textbf{7.59$\boldsymbol{\times}$} at 1M.
In matched 12-hour DiffusionGemma 26B-A4B SFT runs, we achieve higher pass
rates at every trained checkpoint on SWE-bench Verified and Terminal-Bench
Lite, finishing one percentage point ahead on both benchmarks.

We show that block-aware parallelism improves long-context BDLM training
efficiency and post-training outcomes within fixed budgets without changing
the objective. BP and CSBP complement data, tensor, and expert parallelism
for scaling BDLM adaptation, AR-to-BDLM conversion, and speculative-drafter
training.

\label{iclr:main-text-end}

\clearpage

\subsection*{Reproducibility statement}
Section~\ref{sec:method} describes BP and CSBP, and
Appendix~\ref{app:correctness} states the assumptions and establishes
equivalence to the reference computation.
Appendix~\ref{app:execution-schedule} gives forward and backward schedules.
Section~\ref{sec:experimental-setup} and
Appendix~\ref{app:training-evaluation-details} describe training and evaluation;
Appendices~\ref{app:fast-dllm-v2-objective} and~\ref{app:dflash2-objective}
detail the conversion and drafter objectives.

\subsection*{Acknowledgments}
We thank the Scaling Intelligence Lab and others for their constructive
feedback during the composition of the paper, including Shayan Talaei, Jon
Saad-Falcon, Seil Kang, Ravenor Davion, and Amir Zeinali. We gratefully
acknowledge support from federal sources: NSF under No.~24-554 (AIMing) and
DARPA under No.~HR00112520038 (Fallingwater). We also gratefully acknowledge
support from Stanford HAI, IBM (Code Generation), Lightspeed, Google, and
Google DeepMind. This work was also partially supported by IBM and Felicis as
an affiliate member of Stanford Institute for Human-centered Artificial
Intelligence (HAI)'s industry program. We also gratefully acknowledge computing
resources provided by Stanford's Marlowe Computing
Instrument~\citep{kapfer2025marlowe}.

\bibliographystyle{templates/iclr2027/iclr2027_conference}
\bibliography{references}

\begin{thebibliography}{41}
\providecommand{\natexlab}[1]{#1}
\providecommand{\url}[1]{\texttt{#1}}
\expandafter\ifx\csname urlstyle\endcsname\relax
  \providecommand{\doi}[1]{doi: #1}\else
  \providecommand{\doi}{doi: \begingroup \urlstyle{rm}\Url}\fi

\bibitem[Ariyak et~al.(2026)Ariyak, Zhang, Wang, Zhu, Bianchi, Srivastava,
  Panda, Bharti, Xu, Heo, Wu, Zou, Liang, Song, Zhang, Athiwaratkun, Zhou, and
  Wu]{CoderForge2026}
Alpay Ariyak, Junda Zhang, Junxiong Wang, Shang Zhu, Federico Bianchi, Sanjana
  Srivastava, Ashwinee Panda, Siddhant Bharti, Chenfeng Xu, John Heo,
  Xiaoxia~Shirley Wu, James Zou, Percy Liang, Leon Song, Ce~Zhang, Ben
  Athiwaratkun, Zhongzhu Zhou, and Qingyang Wu.
\newblock {CoderForge-Preview}: {SOTA} open dataset for training efficient
  agents, February 2026.
\newblock URL \url{https://www.together.ai/blog/coderforge-preview}.
\newblock Project core leads: Alpay Ariyak; Zhongzhu Zhou; Qingyang Wu.

\bibitem[Arriola et~al.(2025)Arriola, Gokaslan, Chiu, Yang, Qi, Han, Sahoo, and
  Kuleshov]{arriola2025blockdiffusion}
Marianne Arriola, Aaron~Kerem Gokaslan, Justin~T. Chiu, Zhihan Yang, Zhixuan
  Qi, Jiaqi Han, Subham~Sekhar Sahoo, and Volodymyr Kuleshov.
\newblock Block diffusion: Interpolating between autoregressive and diffusion
  language models.
\newblock In \emph{International Conference on Learning Representations}, 2025.
\newblock URL \url{https://arxiv.org/abs/2503.09573}.

\bibitem[Brandon et~al.(2023)Brandon, Nrusimha, Qian, Ankner, Jin, Song, and
  Ragan-Kelley]{brandon2023striped}
William Brandon, Aniruddha Nrusimha, Kevin Qian, Zachary Ankner, Tian Jin,
  Zhiye Song, and Jonathan Ragan-Kelley.
\newblock Striped attention: Faster ring attention for causal transformers.
\newblock \emph{arXiv preprint arXiv:2311.09431}, 2023.
\newblock URL \url{https://arxiv.org/abs/2311.09431}.

\bibitem[Chen et~al.(2026)Chen, Liang, and Liu]{chen2026dflash}
Jian Chen, Yesheng Liang, and Zhijian Liu.
\newblock {DFlash: Block Diffusion for Flash Speculative Decoding}.
\newblock In \emph{Proceedings of the International Conference on Machine
  Learning ({ICML})}, 2026.
\newblock URL \url{https://arxiv.org/abs/2602.06036}.

\bibitem[Chen et~al.(2016)Chen, Xu, Zhang, and Guestrin]{chen2016checkpointing}
Tianqi Chen, Bing Xu, Chiyuan Zhang, and Carlos Guestrin.
\newblock Training deep nets with sublinear memory cost.
\newblock \emph{arXiv preprint arXiv:1604.06174}, 2016.
\newblock URL \url{https://arxiv.org/abs/1604.06174}.

\bibitem[Chowdhery et~al.(2022)Chowdhery, Narang, Devlin,
  et~al.]{chowdhery2022palm}
Aakanksha Chowdhery, Sharan Narang, Jacob Devlin, et~al.
\newblock {PaLM}: Scaling language modeling with pathways.
\newblock \emph{arXiv preprint arXiv:2204.02311}, 2022.
\newblock URL \url{https://arxiv.org/abs/2204.02311}.

\bibitem[Chowdhury et~al.(2024)Chowdhury, Aung, Shern, Jaffe, Sherburn,
  Starace, Mays, Dias, Aljubeh, Glaese, Jimenez, Yang, Ho, Patwardhan, Liu, and
  Madry]{chowdhury2024swebenchverified}
Neil Chowdhury, James Aung, Chan~Jun Shern, Oliver Jaffe, Dane Sherburn, Giulio
  Starace, Evan Mays, Rachel Dias, Marwan Aljubeh, Mia Glaese, Carlos~E.
  Jimenez, John Yang, Leyton Ho, Tejal Patwardhan, Kevin Liu, and Aleksander
  Madry.
\newblock Introducing {SWE}-bench verified, 2024.
\newblock URL \url{https://openai.com/index/introducing-swe-bench-verified/}.

\bibitem[Dao et~al.(2022)Dao, Fu, Ermon, Rudra, and
  R{\'e}]{dao2022flashattention}
Tri Dao, Daniel~Y. Fu, Stefano Ermon, Atri Rudra, and Christopher R{\'e}.
\newblock {FlashAttention}: Fast and memory-efficient exact attention with
  {IO}-awareness.
\newblock In \emph{Advances in Neural Information Processing Systems}, 2022.
\newblock URL \url{https://arxiv.org/abs/2205.14135}.

\bibitem[{DiffusionGemma Team}(2026)]{diffusiongemma2026report}
{DiffusionGemma Team}.
\newblock {DiffusionGemma} technical report.
\newblock \emph{arXiv preprint arXiv:2608.00146}, 2026.
\newblock URL \url{https://arxiv.org/abs/2608.00146}.

\bibitem[Dong et~al.(2025)Dong, Feng, Guessous, Liang, and
  He]{dong2025flexattention}
Juechu Dong, Boyuan Feng, Driss Guessous, Yanbo Liang, and Horace He.
\newblock {FlexAttention}: A programming model for generating fused attention
  variants.
\newblock In \emph{Proceedings of Machine Learning and Systems}, volume~7,
  2025.
\newblock URL
  \url{https://proceedings.mlsys.org/paper_files/paper/2025/hash/61a9278dfef5f871b5e472389f8d6fa1-Abstract-Conference.html}.

\bibitem[Dubey et~al.(2024)Dubey, Jauhri, Pandey, Kadian, Al-Dahle, Letman,
  Mathur, Schelten, Yang, Fan, et~al.]{dubey2024llama3}
Abhimanyu Dubey, Abhinav Jauhri, Abhinav Pandey, Abhishek Kadian, Ahmad
  Al-Dahle, Aiesha Letman, Akhil Mathur, Alan Schelten, Amy Yang, Angela Fan,
  et~al.
\newblock The {Llama 3} herd of models.
\newblock \emph{arXiv preprint arXiv:2407.21783}, 2024.
\newblock URL \url{https://arxiv.org/abs/2407.21783v1}.

\bibitem[Fu et~al.(2026)Fu, Whalen, Garg, et~al.]{fu2026nemotronlabsdiffusion}
Yonggan Fu, Lexington Whalen, Abhinav Garg, et~al.
\newblock {Nemotron-Labs-Diffusion}: A tri-mode language model unifying
  autoregressive, diffusion, and self-speculation decoding.
\newblock \emph{arXiv preprint arXiv:2607.05722}, 2026.
\newblock URL \url{https://arxiv.org/abs/2607.05722}.

\bibitem[Gu et~al.(2024)Gu, Sun, Hu, Huang, Chen, Xiong, Wang, Chen, Zhao,
  Fang, Wen, Zhang, Jin, and Liu]{gu2024loongtrain}
Diandian Gu, Peng Sun, Qinghao Hu, Ting Huang, Xun Chen, Yingtong Xiong,
  Guoteng Wang, Qiaoling Chen, Shangchun Zhao, Jiarui Fang, Yonggang Wen,
  Tianwei Zhang, Xin Jin, and Xuanzhe Liu.
\newblock {LoongTrain}: Efficient training of long-sequence {LLMs} with
  head-context parallelism.
\newblock \emph{arXiv preprint arXiv:2406.18485}, 2024.
\newblock URL \url{https://arxiv.org/abs/2406.18485}.

\bibitem[{Harbor Framework Team}(2026)]{Harbor_Framework}
{Harbor Framework Team}.
\newblock {Harbor}: A framework for evaluating and optimizing agents and models
  in container environments, August 2026.
\newblock URL \url{https://harborframework.com/}.

\bibitem[{Inco AI}(2026)]{inco2026dflash2}
{Inco AI}.
\newblock {DFlash 2: Keep Drafting Parallel}, August 2026.
\newblock URL \url{https://inco.ai/blog/dflash2/}.

\bibitem[Jacobs et~al.(2023)Jacobs, Tanaka, Zhang, Zhang, Song, Rajbhandari,
  and He]{jacobs2023ulysses}
Sam~Ade Jacobs, Masahiro Tanaka, Chengming Zhang, Minjia Zhang, Shuaiwen~Leon
  Song, Samyam Rajbhandari, and Yuxiong He.
\newblock {DeepSpeed-Ulysses}: System optimizations for enabling training of
  extreme long sequence transformer models.
\newblock \emph{arXiv preprint arXiv:2309.14509}, 2023.
\newblock URL \url{https://arxiv.org/abs/2309.14509}.

\bibitem[Jimenez et~al.(2024)Jimenez, Yang, Wettig, Yao, Pei, Press, and
  Narasimhan]{jimenez2024swebench}
Carlos~E. Jimenez, John Yang, Alexander Wettig, Shunyu Yao, Kexin Pei, Ofir
  Press, and Karthik~R. Narasimhan.
\newblock {SWE}-bench: Can language models resolve real-world {GitHub} issues?
\newblock In \emph{The Twelfth International Conference on Learning
  Representations}, 2024.
\newblock URL \url{https://openreview.net/forum?id=VTF8yNQM66}.

\bibitem[Kapfer et~al.(2025)Kapfer, Stine, Narasimhan, Mentzel, and
  Cand{\`e}s]{kapfer2025marlowe}
Craig Kapfer, Kurt Stine, Balasubramanian Narasimhan, Christopher Mentzel, and
  Emmanuel Cand{\`e}s.
\newblock {Marlowe}: Stanford's {GPU}-based computational instrument, 2025.
\newblock URL \url{https://doi.org/10.5281/zenodo.14751899}.

\bibitem[Korthikanti et~al.(2022)Korthikanti, Casper, Lym, McAfee, Andersch,
  Shoeybi, and Catanzaro]{korthikanti2022recomputation}
Vijay Korthikanti, Jared Casper, Sangkug Lym, Lawrence McAfee, Michael
  Andersch, Mohammad Shoeybi, and Bryan Catanzaro.
\newblock Reducing activation recomputation in large transformer models.
\newblock \emph{arXiv preprint arXiv:2205.05198}, 2022.
\newblock URL \url{https://arxiv.org/abs/2205.05198}.

\bibitem[Lepikhin et~al.(2020)Lepikhin, Lee, Xu, Chen, Firat, Huang, Krikun,
  Shazeer, and Chen]{lepikhin2020gshard}
Dmitry Lepikhin, HyoukJoong Lee, Yuanzhong Xu, Dehao Chen, Orhan Firat, Yanping
  Huang, Maxim Krikun, Noam Shazeer, and Zhifeng Chen.
\newblock {GShard}: Scaling giant models with conditional computation and
  automatic sharding.
\newblock \emph{arXiv preprint arXiv:2006.16668}, 2020.
\newblock URL \url{https://arxiv.org/abs/2006.16668}.

\bibitem[Leviathan et~al.(2023)Leviathan, Kalman, and
  Matias]{leviathan2023speculative}
Yaniv Leviathan, Matan Kalman, and Yossi Matias.
\newblock Fast inference from transformers via speculative decoding.
\newblock In \emph{Proceedings of the 40th International Conference on Machine
  Learning}, volume 202 of \emph{Proceedings of Machine Learning Research},
  pp.\  19274--19286, 2023.
\newblock URL \url{https://proceedings.mlr.press/v202/leviathan23a.html}.

\bibitem[Li et~al.(2025)Li, Zhu, Wang, Yin, Shi, Wang, Zhang, Huang, Zheng, and
  Zhang]{specforge2025}
Shenggui Li, Yikai Zhu, Chao Wang, Fan Yin, Shuai Shi, Yubo Wang, Yi~Zhang,
  Yingyi Huang, Haoshuai Zheng, and Yineng Zhang.
\newblock {SpecForge}: Train speculative decoding models effortlessly.
\newblock \url{https://github.com/sgl-project/specforge}, 2025.

\bibitem[Li et~al.(2026)Li, Wang, Zhu, Wang, Yin, Shi, Chen, Dong, Chen, Pan,
  et~al.]{li2026specforge}
Shenggui Li, Chao Wang, Yikai Zhu, Yubo Wang, Fan Yin, Shuai Shi, Yefei Chen,
  Xiaomin Dong, Qiaoling Chen, Jin Pan, et~al.
\newblock {SpecForge}: A flexible and efficient open-source training framework
  for speculative decoding.
\newblock \emph{arXiv preprint arXiv:2603.18567}, 2026.

\bibitem[Liang et~al.(2025)Liang, Liu, Wright, Constable, Gu, Huang, Zhang,
  Feng, Huang, Wang, Purandare, Nadathur, and Idreos]{pytorch_torchtitan}
Wanchao Liang, Tianyu Liu, Less Wright, Will Constable, Andrew Gu, Chien-Chin
  Huang, Iris Zhang, Wei Feng, Howard Huang, Junjie Wang, Sanket Purandare,
  Gokul Nadathur, and Stratos Idreos.
\newblock {TorchTitan}: One-stop {PyTorch} native solution for production ready
  {LLM} pre-training.
\newblock In \emph{The Thirteenth International Conference on Learning
  Representations}, 2025.
\newblock URL \url{https://openreview.net/forum?id=SFN6Wm7YBI}.

\bibitem[Liu et~al.(2023)Liu, Zaharia, and Abbeel]{liu2023ringattention}
Hao Liu, Matei Zaharia, and Pieter Abbeel.
\newblock Ring attention with blockwise transformers for near-infinite context.
\newblock \emph{arXiv preprint arXiv:2310.01889}, 2023.
\newblock URL \url{https://arxiv.org/abs/2310.01889}.

\bibitem[Milakov \& Gimelshein(2018)Milakov and Gimelshein]{milakov2018online}
Maxim Milakov and Natalia Gimelshein.
\newblock Online normalizer calculation for softmax.
\newblock \emph{arXiv preprint arXiv:1805.02867}, 2018.
\newblock URL \url{https://arxiv.org/abs/1805.02867}.

\bibitem[{NVIDIA}(n.d.)]{nvidia_megatron_cp}
{NVIDIA}.
\newblock {Context Parallel Package --- Megatron Core}.
\newblock Software documentation, n.d.
\newblock URL
  \url{https://docs.nvidia.com/megatron-core/developer-guide/latest/user-guide/features/context_parallel.html}.
\newblock Accessed September 5, 2026.

\bibitem[O'Donoghue \& Flennerhag(2026)O'Donoghue and
  Flennerhag]{odonoghue2026diffusiongemma}
Brendan O'Donoghue and Sebastian Flennerhag.
\newblock {DiffusionGemma}: 4x faster text generation.
\newblock Google Blog, June 2026.
\newblock URL
  \url{https://blog.google/innovation-and-ai/technology/developers-tools/diffusion-gemma-faster-text-generation/}.

\bibitem[{OpenThoughts-Agent team, Snorkel AI, Bespoke
  Labs}(2026)]{OpenThoughts-TBLite}
{OpenThoughts-Agent team, Snorkel AI, Bespoke Labs}.
\newblock {OpenThoughts-TBLite}: A high-signal benchmark for iterating on
  terminal agents.
\newblock \url{https://www.openthoughts.ai/blog/openthoughts-tblite}, February
  2026.

\bibitem[Paszke et~al.(2019)Paszke, Gross, Massa, Lerer, Bradbury, Chanan,
  Killeen, Lin, Gimelshein, Antiga, Desmaison, Kopf, Yang, DeVito, Raison,
  Tejani, Chilamkurthy, Steiner, Fang, Bai, and Chintala]{paszke2019pytorch}
Adam Paszke, Sam Gross, Francisco Massa, Adam Lerer, James Bradbury, Gregory
  Chanan, Trevor Killeen, Zeming Lin, Natalia Gimelshein, Luca Antiga, Alban
  Desmaison, Andreas Kopf, Edward Yang, Zachary DeVito, Martin Raison, Alykhan
  Tejani, Sasank Chilamkurthy, Benoit Steiner, Lu~Fang, Junjie Bai, and Soumith
  Chintala.
\newblock {PyTorch}: An imperative style, high-performance deep learning
  library.
\newblock In \emph{Advances in Neural Information Processing Systems},
  volume~32, 2019.
\newblock URL
  \url{https://papers.neurips.cc/paper_files/paper/2019/hash/bdbca288fee7f92f2bfa9f7012727740-Abstract.html}.

\bibitem[Peng et~al.(2026)Peng, Zhang, Lu, Cao, Lu, Lin, Han, and
  Sun]{peng2026litecoderterminal}
Xiaoxuan Peng, Kaiqi Zhang, Xinyu Lu, Boxi Cao, Yaojie Lu, Hongyu Lin, Xianpei
  Han, and Le~Sun.
\newblock {LiteCoder-Terminal}: Scaling long-horizon terminal environments for
  learning language agents.
\newblock \emph{arXiv preprint arXiv:2605.29559}, 2026.
\newblock URL \url{https://arxiv.org/abs/2605.29559}.

\bibitem[{Qwen Team}(2026{\natexlab{a}})]{qwen2026qwen35}
{Qwen Team}.
\newblock {Qwen3.5}: Towards native multimodal agents, February
  2026{\natexlab{a}}.
\newblock URL \url{https://qwen.ai/blog?id=qwen3.5}.

\bibitem[{Qwen Team}(2026{\natexlab{b}})]{qwen2026qwen38}
{Qwen Team}.
\newblock {Qwen3.8-Max}: A new bar for coding and cowork, August
  2026{\natexlab{b}}.
\newblock URL \url{https://qwen.ai/blog?id=qwen3.8}.

\bibitem[Rajbhandari et~al.(2019)Rajbhandari, Rasley, Ruwase, and
  He]{rajbhandari2019zero}
Samyam Rajbhandari, Jeff Rasley, Olatunji Ruwase, and Yuxiong He.
\newblock {ZeRO}: Memory optimizations toward training trillion parameter
  models.
\newblock \emph{arXiv preprint arXiv:1910.02054}, 2019.
\newblock URL \url{https://arxiv.org/abs/1910.02054}.

\bibitem[Shoeybi et~al.(2019)Shoeybi, Patwary, Puri, LeGresley, Casper, and
  Catanzaro]{shoeybi2019megatron}
Mohammad Shoeybi, Mostofa Patwary, Raul Puri, Patrick LeGresley, Jared Casper,
  and Bryan Catanzaro.
\newblock {Megatron-LM}: Training multi-billion parameter language models using
  model parallelism.
\newblock \emph{arXiv preprint arXiv:1909.08053}, 2019.
\newblock URL \url{https://arxiv.org/abs/1909.08053}.

\bibitem[Vaswani et~al.(2017)Vaswani, Shazeer, Parmar, Uszkoreit, Jones, Gomez,
  Kaiser, and Polosukhin]{vaswani2017attention}
Ashish Vaswani, Noam Shazeer, Niki Parmar, Jakob Uszkoreit, Llion Jones,
  Aidan~N. Gomez, Lukasz Kaiser, and Illia Polosukhin.
\newblock Attention is all you need.
\newblock \emph{arXiv preprint arXiv:1706.03762}, 2017.
\newblock URL \url{https://arxiv.org/abs/1706.03762}.

\bibitem[Wang et~al.(2025)Wang, Li, Song, Xu, Tang, Zhuge, Pan, Song, Li,
  Singh, Tran, Li, Ma, Zheng, Qian, Shao, Muennighoff, Zhang, Hui, Lin,
  Brennan, Peng, Ji, and Neubig]{wang2025openhands}
Xingyao Wang, Boxuan Li, Yufan Song, Frank~F. Xu, Xiangru Tang, Mingchen Zhuge,
  Jiayi Pan, Yueqi Song, Bowen Li, Jaskirat Singh, Hoang~H. Tran, Fuqiang Li,
  Ren Ma, Mingzhang Zheng, Bill Qian, Yanjun Shao, Niklas Muennighoff, Yizhe
  Zhang, Binyuan Hui, Junyang Lin, Robert Brennan, Hao Peng, Heng Ji, and
  Graham Neubig.
\newblock {OpenHands}: An open platform for {AI} software developers as
  generalist agents.
\newblock In \emph{The Thirteenth International Conference on Learning
  Representations}, 2025.
\newblock URL \url{https://openreview.net/forum?id=OJd3ayDDoF}.

\bibitem[Wu et~al.(2025)Wu, Zhang, Xue, Diao, Fu, Liu, Molchanov, Luo, Han, and
  Xie]{wu2025fastdllmv2}
Chengyue Wu, Hao Zhang, Shuchen Xue, Shizhe Diao, Yonggan Fu, Zhijian Liu,
  Pavlo Molchanov, Ping Luo, Song Han, and Enze Xie.
\newblock {Fast-dLLM v2}: Efficient block-diffusion {LLM}.
\newblock \emph{arXiv preprint arXiv:2509.26328}, 2025.
\newblock URL \url{https://arxiv.org/abs/2509.26328}.

\bibitem[Yang et~al.(2024)Yang, Jimenez, Wettig, Lieret, Yao, Narasimhan, and
  Press]{yang2024sweagent}
John Yang, Carlos~E. Jimenez, Alexander Wettig, Kilian Lieret, Shunyu Yao,
  Karthik Narasimhan, and Ofir Press.
\newblock {SWE-agent}: Agent-computer interfaces enable automated software
  engineering.
\newblock \emph{arXiv preprint arXiv:2405.15793}, 2024.
\newblock URL \url{https://arxiv.org/abs/2405.15793}.

\bibitem[Yao et~al.(2023)Yao, Zhao, Yu, Du, Shafran, Narasimhan, and
  Cao]{yao2023react}
Shunyu Yao, Jeffrey Zhao, Dian Yu, Nan Du, Izhak Shafran, Karthik Narasimhan,
  and Yuan Cao.
\newblock {ReAct}: Synergizing reasoning and acting in language models.
\newblock In \emph{International Conference on Learning Representations}, 2023.
\newblock URL \url{https://arxiv.org/abs/2210.03629}.

\bibitem[Zadouri et~al.(2026)Zadouri, Hoehnerbach, Shah, Liu, Thakkar, and
  Dao]{zadouri2026flashattention4}
Ted Zadouri, Markus Hoehnerbach, Jay Shah, Timmy Liu, Vijay Thakkar, and Tri
  Dao.
\newblock {FlashAttention-4}: Algorithm and kernel pipelining co-design for
  asymmetric hardware scaling.
\newblock \emph{arXiv preprint arXiv:2603.05451}, 2026.
\newblock URL \url{https://arxiv.org/abs/2603.05451}.

\end{thebibliography}

\clearpage
\appendix
\section{Correctness of Context-Sharded Block Parallelism}
\label{app:correctness}

\begin{proposition}[Exactness of CSBP]
\label{prop:csbp-exactness}
Let $\{\mathcal{B}_r\}_{r=1}^{P}$ partition the $B$ target blocks, and let the
clean token rows be partitioned arbitrarily across the same $P$ ranks. Given
any fixed input sequence, sampled diffusion times and corruptions, per-token
model randomness, and loss normalization, CSBP computes, for every
parameter value $\theta$, the same global sampled loss and parameter gradient
as the reference length-$2L$ BDLM computation, up to floating-point reduction
order.
\end{proposition}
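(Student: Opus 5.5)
We prove exactness first for the forward computation and the loss, and then for the gradients. The forward part is an induction over layers. The gradient part uses the chain rule on the same computation graph together with Equation~\ref{eq:block-gradient-sum}.

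\textbf{Forward pass.} Fix $\theta$, the input, the sampled times $t_b$, the corruptions $x_{t_b}^{(b)}$, the per-token randomness, and the normalization. All stochastic quantities are then deterministic functions attached to token rows. We show by induction on the layer index $\ell$ that every clean row held by its shard owner, and every corrupted row of block $b$ held on the rank $r$ with $b\in\mathcal{B}_r$, equals the corresponding row of the reference length-$2L$ computation.

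All non-attention sublayers (norms, MLP or MoE, projections, the chunked loss head) act row-wise. Given equal inputs, they therefore produce equal outputs no matter which rank evaluates them. Expert routing is also per-token, so EP dispatch does not change this.

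For attention, consider a corrupted query row of block $b$. In the reference, the block-causal mask makes its visible key set exactly $[K_b;K^{\mathrm{clean}}_{<b}]$, so its output is Equation~\ref{eq:csbp-block-attention}. CSBP evaluates this as a union of disjoint key subsets. The local subset is $K_b$. The remaining subsets are $K^{\mathrm{clean}}_{<b}$ intersected with each clean shard, and these are supplied by CP. We then combine the subsets with online softmax. The online-softmax merge identity states that merging partial (max, log-sum-exp, unnormalized output) triples over a disjoint partition of the keys reproduces softmax attention over their union exactly. With it, the CSBP output equals the reference output up to summation order.

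Clean query rows see exactly the keys they see in the reference, namely the clean prefix plus causal access within their block. Their computation is ordinary CP over the clean sequence alone, which is correct for any row partition by the same merge identity. This uses the fact that no clean query attends to corrupted keys, so removing corrupted rows from the CP stream loses nothing.

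Once all rows agree, the per-block losses $\mathcal{L}_b$ agree. The global sampled loss is $\sum_r\mathcal{L}_r=\sum_b\mathcal{L}_b$ by Equation~\ref{eq:local-block-loss}, provided the normalization constants are the same fixed global ones as in the reference, which the hypothesis grants.

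\textbf{Gradients.} Both computations are the same directed acyclic graph of row-wise operations and attention. CSBP only relabels which rank hosts each node. Reverse-mode differentiation is a sum over paths, so it suffices to check that every edge's adjoint contribution reaches its node.

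\emph{Corrupted nodes.} The corrupted rows of block $b$ have consumers only within block $b$'s computation on the same rank. Their adjoints are therefore computed entirely locally.

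\emph{Clean K/V nodes.} A clean K/V row $j$ has consumers among the clean queries and among the corrupted queries of every block $b$ whose prefix contains $j$, and these consumers may sit on several ranks. The attention backward pass (Algorithm~\ref{alg:csbp-backward}) computes each consumer's contribution $\partial\mathcal{L}/\partial K_j$ from that consumer's stored log-sum-exp. This is the same quantity as in the reference, because the forward statistics match. CP then reduce-scatters these contributions to the owner of $j$, which realises the multivariate chain-rule sum.

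\emph{Parameters.} The parameter gradient is the sum over all rows and ranks of local parameter adjoints. The DP/ZeRO reduction across the group performs this sum, giving Equation~\ref{eq:block-gradient-sum}.

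\textbf{Main obstacle.} The step needing most care is the clean K/V adjoint. We must show that the set of consumers in CSBP is exactly the reference set: each (query, clean key) pair is counted once, with none missing and none duplicated. The risk arises when a block's prefix spans several shards and the same rank also holds clean queries for that shard. We handle this by indexing contributions by the pair $(q,j)$ with $q$ a query row and $j$ a clean key row. We then check that the per-rank masks partition the reference mask's nonzero pairs under any block partition and any clean row partition. Given this, the remaining differences are only in summation order, which is the stated floating-point caveat.
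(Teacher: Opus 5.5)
Your proposal is correct and takes essentially the same route as the paper's proof. The common argument has three parts. The first is a layer-by-layer forward induction: each query's visible keys are split into disjoint pieces (its local corrupted block plus the clean prefix intersected with each clean shard), and the online-softmax merge recombines them exactly. The second is equality of the loss, obtained by summing the per-block losses over the partition $\{\mathcal{B}_r\}$. The third is a backward argument in which corrupted Q/K/V adjoints stay on the block owner, while each clean K/V adjoint is the chain-rule sum over all consuming query rows, accumulated on the shard owner. Your ``same DAG with relabeled hosts'' framing and explicit $(q,j)$ pair-counting check are a repackaging of the paper's layer-wise reverse induction and its block-independence identity $\partial O_{b}^{\mathrm{corr}}/\partial Z_{b'}^{\mathrm{corr}}=0$ for $b'\neq b$.
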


\begin{proof}
Fix the input sequence, sampled diffusion times and corruptions, model
randomness, and loss normalization. Let
$\mathcal L_b^{\mathrm{ref}}(\theta)$ and
$\mathcal L_b^{\mathrm{CSBP}}(\theta)$ be the resulting sampled losses for
block $b$.
For rank $r$, define
\begin{equation}
    \mathcal L_r^{\mathrm{CSBP}}
    =\sum_{b\in\mathcal B_r}\mathcal L_b^{\mathrm{CSBP}},
    \qquad
    \mathcal L^{\mathrm{CSBP}}
    =\sum_{r=1}^{P}\mathcal L_r^{\mathrm{CSBP}},
    \qquad
    \mathcal L^{\mathrm{ref}}
    =\sum_{b=1}^{B}\mathcal L_b^{\mathrm{ref}}.
    \label{eq:correctness-loss-definitions}
\end{equation}

Let $H_d^{\mathrm{clean}}$ and $H_{b,d}^{\mathrm{corr}}$ denote the clean
representations and the corrupted representations of block $b$ after $d$ of
the model's $D$ transformer layers. Thus $H_0$ is the embedding output and
$H_D$ enters the output projection. We use $H_d$ to denote all clean and
corrupted representations at depth $d$, and $\theta_d$ to denote the
parameters of layer $d$ for $1\leq d\leq D$.

\begin{enumerate}
\item
\textbf{Reference dependencies.}
The length-$2L$ construction and block-causal mask described in
Section~\ref{sec:background} determine the visible K/V rows. At layer $d$, a
clean query at position $q$ attends to clean positions $1,\ldots,q$. The
corrupted queries in block $b$ attend to their own corrupted block and the
clean positions in preceding blocks. We write $K_{\leq q,d}^{\mathrm{clean}}$
and $V_{\leq q,d}^{\mathrm{clean}}$ for the clean K/V rows at positions
$1,\ldots,q$. The reference attention is
\begin{align}
    O_{q,d}^{\mathrm{clean,ref}}
    &= \operatorname{Attn}\!\left(
        Q_{q,d}^{\mathrm{clean}},
        K_{\leq q,d}^{\mathrm{clean}},
        V_{\leq q,d}^{\mathrm{clean}}
      \right),
    \label{eq:correctness-clean-attention}
    \\
    O_{b,d}^{\mathrm{corr,ref}}
    &= \operatorname{Attn}\!\left(
        Q_{b,d}^{\mathrm{corr}},
        [K_{b,d}^{\mathrm{corr}};K_{<b,d}^{\mathrm{clean}}],
        [V_{b,d}^{\mathrm{corr}};V_{<b,d}^{\mathrm{clean}}]
      \right).
    \label{eq:correctness-corrupted-attention}
\end{align}
Equation~\ref{eq:correctness-corrupted-attention} implies, for every
$Z\in\{Q,K,V\}$,
\begin{equation}
    \frac{\partial O_{b,d}^{\mathrm{corr,ref}}}
         {\partial Z_{b',d}^{\mathrm{corr}}}
    =0
    \qquad (b'\neq b).
    \label{eq:correctness-block-dependency}
\end{equation}

\item
\textbf{Exact sharded attention.}
Let $C_r$ be the set of clean token positions stored on rank $r$. The clean
shards satisfy
\begin{equation}
    \bigcup_{r=1}^{P} C_r=\{1,\ldots,L\},
    \qquad
    C_r\cap C_{r'}=\varnothing \quad (r\neq r').
    \label{eq:correctness-clean-partition}
\end{equation}
Consider one query vector $\mathbf q$. Let $S$ be its visible key positions,
partitioned into disjoint nonempty sets $S_1,\ldots,S_J$. For the key and value
rows $\mathbf k_i$ and $\mathbf v_i$, let
$a_i=\mathbf q\mathbf k_i^\top/\sqrt{d_h}$ and
$a_{\max}=\max_{i\in S}a_i$. Since the sets $S_j$ partition $S$,
\begin{equation}
    \operatorname{Attn}(\mathbf q,K_S,V_S)
    =
    \frac{\displaystyle\sum_{j=1}^{J}\sum_{i\in S_j}
          e^{a_i-a_{\max}}\mathbf v_i}
         {\displaystyle\sum_{j=1}^{J}\sum_{i\in S_j}e^{a_i-a_{\max}}}.
    \label{eq:correctness-online-softmax-merge}
\end{equation}
Online softmax evaluates these partitioned sums exactly
\citep{milakov2018online,dao2022flashattention}. For a clean query at position
$q$, the nonempty sets $C_r\cap\{1,\ldots,q\}$ partition its visible K/V rows.
For a corrupted query in block $b$, its local corrupted block together with
the nonempty clean sets $C_r\cap\{1,\ldots,(b-1)M\}$ partition its visible
K/V rows. These clean sets are empty for $b=1$.
CSBP therefore supplies exactly the rows in
Equations~\ref{eq:correctness-clean-attention} and
\ref{eq:correctness-corrupted-attention}.

\item
\textbf{Forward induction.}
The common inputs and embedding computation give
$H_0^{\mathrm{CSBP}}=H_0^{\mathrm{ref}}$. Assume
$H_{d-1}^{\mathrm{CSBP}}=H_{d-1}^{\mathrm{ref}}$ for some
$1\leq d\leq D$. Their Q/K/V projections agree, and
Equation~\ref{eq:correctness-online-softmax-merge} gives
\begin{equation}
    \begin{aligned}
      O_{q,d}^{\mathrm{clean,CSBP}}&=O_{q,d}^{\mathrm{clean,ref}}
      && \forall q,\\
      O_{b,d}^{\mathrm{corr,CSBP}}=O_{b,d}^{\mathrm{corr,ref}}
      && \forall b.
    \end{aligned}
    \label{eq:correctness-forward-step}
\end{equation}
The remaining operations in layer $d$ are unchanged, so
$H_d^{\mathrm{CSBP}}=H_d^{\mathrm{ref}}$. Induction over
$d=1,\ldots,D$ gives $H_D^{\mathrm{CSBP}}=H_D^{\mathrm{ref}}$.

\item
\textbf{Loss.}
The output projection and block losses are unchanged. The forward induction
therefore gives
$\mathcal L_b^{\mathrm{CSBP}}=\mathcal L_b^{\mathrm{ref}}$ for every block.
Since $\{\mathcal B_r\}_{r=1}^{P}$ partitions the target blocks,
\begin{equation}
    \mathcal L^{\mathrm{CSBP}}
    =\sum_{r=1}^{P}\sum_{b\in\mathcal B_r}\mathcal L_b^{\mathrm{CSBP}}
    =\sum_{b=1}^{B}\mathcal L_b^{\mathrm{ref}}
    =\mathcal L^{\mathrm{ref}}.
    \label{eq:correctness-sampled-loss}
\end{equation}

\item
\textbf{Backward pass.}
The equal final representations and common loss functions give the same
derivatives at the output of the model. Let $\mathcal L$ denote the global
sampled loss of either execution. For
$Z\in\{Q,K,V\}$,
Equation~\ref{eq:correctness-block-dependency} and the chain rule give
\begin{equation}
    \frac{\partial \mathcal L}{\partial Z_{b,d}^{\mathrm{corr}}}
    = \sum_{b'=1}^{B}
      \left(
      \frac{\partial O_{b',d}^{\mathrm{corr}}}
           {\partial Z_{b,d}^{\mathrm{corr}}}
      \right)^{\!\top}
      \frac{\partial \mathcal L}{\partial O_{b',d}^{\mathrm{corr}}}
    = \left(
      \frac{\partial O_{b,d}^{\mathrm{corr}}}
           {\partial Z_{b,d}^{\mathrm{corr}}}
      \right)^{\!\top}
      \frac{\partial \mathcal L}{\partial O_{b,d}^{\mathrm{corr}}}.
    \label{eq:correctness-corrupted-gradient}
\end{equation}
Thus corrupted Q/K/V derivatives depend only on the queries from the same
block and are computed on its owner. Let $Z_{r,d}^{\mathrm{clean}}$ denote the
clean K/V rows at positions $C_r$, and let $O_{q,d}$ denote the attention
output for query row $q$. The chain rule gives
\begin{equation}
    \frac{\partial \mathcal L}{\partial Z_{r,d}^{\mathrm{clean}}}
    = \sum_q
      \left(
        \frac{\partial O_{q,d}}{\partial Z_{r,d}^{\mathrm{clean}}}
      \right)^{\!\top}
      \frac{\partial \mathcal L}{\partial O_{q,d}},
    \qquad Z\in\{K,V\},
    \label{eq:correctness-clean-gradient}
\end{equation}
where the sum is over all clean and corrupted query rows. A term is zero when
the query cannot attend to a position in $C_r$. Exact CP backward accumulates
this sum on rank $r$. For every clean query row $q$,
\begin{equation}
    \frac{\partial \mathcal L}{\partial Q_{q,d}^{\mathrm{clean}}}
    = \left(
        \frac{\partial O_{q,d}^{\mathrm{clean}}}
             {\partial Q_{q,d}^{\mathrm{clean}}}
      \right)^{\!\top}
      \frac{\partial \mathcal L}{\partial O_{q,d}^{\mathrm{clean}}},
    \label{eq:correctness-query-gradient}
\end{equation}
which is computed on the query owner. Because the two executions have the same
forward values and output derivatives, Equations~\ref{eq:correctness-corrupted-gradient}--\ref{eq:correctness-query-gradient}
give the same Q/K/V derivatives. Backpropagation through the unchanged
remainder of layer $d$ therefore satisfies
\begin{equation}
    \begin{aligned}
      \frac{\partial \mathcal L^{\mathrm{CSBP}}}
           {\partial H_{d-1}^{\mathrm{CSBP}}}
      &=
      \left(
        \frac{\partial H_d^{\mathrm{CSBP}}}
             {\partial H_{d-1}^{\mathrm{CSBP}}}
      \right)^{\!\top}
      \frac{\partial \mathcal L^{\mathrm{CSBP}}}
           {\partial H_d^{\mathrm{CSBP}}} \\
      &=
      \left(
        \frac{\partial H_d^{\mathrm{ref}}}
             {\partial H_{d-1}^{\mathrm{ref}}}
      \right)^{\!\top}
      \frac{\partial \mathcal L^{\mathrm{ref}}}
           {\partial H_d^{\mathrm{ref}}}
      =
      \frac{\partial \mathcal L^{\mathrm{ref}}}
           {\partial H_{d-1}^{\mathrm{ref}}}, \\[4pt]
      \nabla_{\theta_d}\mathcal L^{\mathrm{CSBP}}
      &=
      \left(
        \frac{\partial H_d^{\mathrm{CSBP}}}{\partial \theta_d}
      \right)^{\!\top}
      \frac{\partial \mathcal L^{\mathrm{CSBP}}}
           {\partial H_d^{\mathrm{CSBP}}} \\
      &=
      \left(
        \frac{\partial H_d^{\mathrm{ref}}}{\partial \theta_d}
      \right)^{\!\top}
      \frac{\partial \mathcal L^{\mathrm{ref}}}
           {\partial H_d^{\mathrm{ref}}}
      =\nabla_{\theta_d}\mathcal L^{\mathrm{ref}}.
    \end{aligned}
    \label{eq:correctness-backward-step}
\end{equation}
Because the output projection is identical, its parameter gradient and
derivative with respect to $H_D$ agree, providing the base case. Reverse
induction through $d=D,\ldots,1$ gives equal transformer-layer gradients and
equal derivatives with respect to $H_0$. The common embedding computation then
gives equal embedding gradients. Finally, linearity of differentiation and
Equation~\ref{eq:correctness-loss-definitions} give
\begin{equation}
    \sum_{r=1}^{P}
    \nabla_\theta \mathcal L_r^{\mathrm{CSBP}}
    =\nabla_\theta \mathcal L^{\mathrm{CSBP}}
    =\nabla_\theta \mathcal L^{\mathrm{ref}}.
    \label{eq:correctness-sampled-gradient}
\end{equation}
\end{enumerate}
Equations~\ref{eq:correctness-sampled-loss} and
\ref{eq:correctness-sampled-gradient} hold for every sampled realization.
Taking expectations also gives the same BDLM objective and objective gradient.
\end{proof}

\paragraph{Communication consequence.}
Under CSBP, only clean K/V and their gradients cross ranks during attention,
while corrupted Q/K/V and their gradients remain on the rank assigned their
block.

\section{Load Balancing}
\label{app:load-balancing}

Both attention workloads in CSBP increase with block position. Later
target blocks attend to longer clean prefixes, and later clean query blocks
attend to longer causal prefixes. Because both costs grow with the same block
index, the same early--late pairing balances them simultaneously.

Assume $B$ equal-sized blocks of $M=L/B$ tokens. Let
$W_b^{\mathrm{target}}$ and $W_b^{\mathrm{clean}}$ denote the valid query--key
pairs for target-block and clean causal attention at block $b$. The $M$ target
queries in block $b$ attend to $(b-1)M$ clean-prefix keys and $M$ keys from
their own corrupted block, for $bM$ visible keys. The $M$ clean queries attend
to the $b-1$ preceding clean blocks and causally within their own block. Their
work is therefore
\begin{align}
    W_b^{\mathrm{target}}
    &= bM^2,
    &
    W_b^{\mathrm{clean}}
    &= (b-1)M^2+\frac{M(M+1)}{2}.
    \label{eq:block-work-by-position}
\end{align}

Dual-end BP pairs target block $b$ with block $B+1-b$. Clean-sequence zigzag
sharding uses the same early--late pairing~\citep{dubey2024llama3}. The work of
each pair is
\begin{align}
    W_b^{\mathrm{target}}+W_{B+1-b}^{\mathrm{target}}
    &= (B+1)M^2,
    &
    W_b^{\mathrm{clean}}+W_{B+1-b}^{\mathrm{clean}}
    &= BM^2+M.
    \label{eq:aligned-pair-work}
\end{align}
Both sums are independent of $b$, so every early--late pair carries the same
amount of target-block work and the same amount of clean causal work. When $B$
is divisible by $2P$, assigning $B/(2P)$ pairs to each rank gives
\begin{align}
    W_r^{\mathrm{target}}
    &= \frac{B(B+1)M^2}{2P},
    &
    W_r^{\mathrm{clean}}
    &= \frac{L(L+1)}{2P},
    \qquad 1\leq r\leq P.
    \label{eq:aligned-rank-work}
\end{align}
Using the same pairs for target-block placement and clean zigzag sharding gives
every rank equal target-block and clean causal attention work. Attention
backward uses the same valid query--key pairs, so these pair counts remain
balanced. When exact divisibility does not hold, we distribute the complete
pairs as evenly as possible and assign any unpaired middle block to the
currently least-loaded rank.

\clearpage
\section{CSBP Algorithm and Execution Schedule}
\label{app:execution-schedule}

Algorithm~\ref{alg:csbp-attention} summarizes one CSBP layer. Each rank stores
one clean-sequence shard and the complete corrupted blocks in $\mathcal B_r$.
All ranks execute concurrently: CP supplies clean K/V, while each block's
corrupted Q/K/V remain local.

\begin{algorithm}[H]
\small
\caption{CSBP forward pass on rank $r$}
\label{alg:csbp-attention}
\begin{algorithmic}[1]
\State \textbf{Input:} Clean shard $(Q_r,K_r,V_r)$ and
  $\{(Q_b,K_b,V_b):b\in\mathcal B_r\}$
\State \textbf{Output:} $O_r$ and $\{O_b:b\in\mathcal B_r\}$
\State All-gather the clean K/V shards to form
  $(K^{\mathrm{clean}},V^{\mathrm{clean}})$ on each rank
\Statex
\State \textbf{Clean-sequence attention (CP)}
\State Compute clean-query attention under the block-causal mask
\State $O_r\gets\operatorname{softmax}\!\left(
  Q_r(K^{\mathrm{clean}})^\top/\sqrt{d_h}+\mathcal M_r\right)
  V^{\mathrm{clean}}$
\Statex
\State \textbf{Assigned corrupted blocks (BP)}
\ForAll{$b\in\mathcal B_r$}
  \State Compute within-block attention from the local $Q_b,K_b,V_b$
  \State Compute clean-prefix attention from the gathered
    $K_{<b}^{\mathrm{clean}},V_{<b}^{\mathrm{clean}}$
  \State Merge the two outputs and LSEs by online softmax to obtain
  \State $O_b\gets\operatorname{Attn}\!\left(
    Q_b,[K_b;K_{<b}^{\mathrm{clean}}],
    [V_b;V_{<b}^{\mathrm{clean}}]\right)$
    \Comment{Equation~\ref{eq:csbp-block-attention}}
\EndFor
\State \Return $O_r$ and $\{O_b:b\in\mathcal B_r\}$
\end{algorithmic}
\end{algorithm}

Algorithm~\ref{alg:csbp-backward} gives the exact backward pass. Rank $r$
keeps the corrupted Q/K/V gradients for its assigned blocks, while CP sums the
clean K/V gradients contributed by all clean and corrupted queries.

\begin{algorithm}[H]
\small
\caption{CSBP attention backward on rank $r$}
\label{alg:csbp-backward}
\begin{algorithmic}[1]
\State \textbf{Input:} Saved forward tensors and
  $\{\mathcal L_b:b\in\mathcal B_r\}$
\State \textbf{Output:} $(dQ_r,dK_r,dV_r)$ and
  $\{(dQ_b,dK_b,dV_b):b\in\mathcal B_r\}$
\Statex
\State \textbf{Clean-query backward (CP)}
\State Backpropagate $O_r$; retain $dQ_r$ and the clean K/V contributions
\Statex
\State \textbf{Assigned corrupted blocks (local backward)}
\ForAll{$b\in\mathcal B_r$}
  \State Backpropagate $\mathcal L_b$ through $O_b$
  \State $(dQ_b,dK_b,dV_b)
    \gets\nabla_{(Q_b,K_b,V_b)}\mathcal L_b$
    \Comment{kept on rank $r$}
  \State Add this block's clean-prefix K/V gradients to the CP reduction
\EndFor
\Statex
\State \textbf{Clean-gradient reduction (CP backward)}
\State Reduce-scatter the clean K/V contributions across ranks
\For{$Z\in\{K,V\}$}
  \State $\displaystyle dZ_r\gets
    \sum_q\left(\frac{\partial O_q}{\partial Z_r}\right)^{\!\top}
    \frac{\partial\mathcal L}{\partial O_q}$
    \Comment{Equation~\ref{eq:correctness-clean-gradient}}
\EndFor
\State \Return $(dQ_r,dK_r,dV_r)$ and
  $\{(dQ_b,dK_b,dV_b):b\in\mathcal B_r\}$
\end{algorithmic}
\end{algorithm}

\clearpage
\paragraph{Efficient fused operator.}
Implementation~\ref{impl:csbp-forward} shows the production forward path. It
starts an asynchronous all-gather of clean K/V, then computes all local
corrupted blocks with one block-diagonal attention launch while communication
runs. After the all-gather, one masked fused-attention launch computes the
clean-K/V contribution for every query. The mask uses each gathered key's
original clean token position, so the kernel consumes the gathered K/V directly
without a reordering copy.

Each attention launch returns a normalized output $O_j$ and row-wise
log-sum-exp (LSE) $z_j$.  For two disjoint key sets, the CUDA kernel sets
$m=\max(z_1,z_2)$, $w_j=e^{z_j-m}$, and computes
\begin{equation}
  O=\frac{w_1O_1+w_2O_2}{w_1+w_2},\qquad
  z=m+\log(w_1+w_2).
  \label{eq:csbp-lse-merge}
\end{equation}
This is Equation~\ref{eq:correctness-online-softmax-merge} applied to the two
attention launches.  The implementation stores the numerator, $m$, and the
normalizer $w_1+w_2$ in FP32 until the final division.

\begin{lstlisting}[
  style=implementation,
  language=Python,
  caption={CSBP forward schedule and CUDA LSE merge.},
  label={impl:csbp-forward}
]
@cuda_kernel
def merge_lse_kernel(numerator, m, normalizer,
                     O_block, z_block):
    # One thread handles one (corrupted-query row, head-dimension) element.
    q, d = cuda_grid_2d()
    if q >= len(z_block):
        return
    m_new = max(m[q], z_block[q])
    w_acc = exp(m[q] - m_new) if isfinite(m[q]) else 0.0
    w_block = exp(z_block[q] - m_new) if isfinite(z_block[q]) else 0.0
    numerator[q, d] = (w_acc * numerator[q, d]
                       + w_block * O_block[q, d])
    if d == 0:
        normalizer[q] = w_acc * normalizer[q] + w_block
        m[q] = m_new


def csbp_collective_forward(Q, K_block, V_block,
                            K_clean_r, V_clean_r, masks, scale):
    # 1. Start the clean K/V all-gather.
    work, gathered = async_all_gather(K_clean_r, V_clean_r)

    # 2. One block-diagonal launch covers every block owned by this rank.
    O_block, LSE_block = block_diagonal_attention(
        Q[:masks.corrupted_rows], K_block, V_block, scale)

    # 3. Use the gathered clean K/V directly in one masked launch.
    work.wait()
    K_clean, V_clean = view_all_gathered_clean_kv(gathered)
    clean_mask = block_causal_mask(masks)
    O_clean, LSE_clean = masked_clean_attention(
        Q, K_clean, V_clean, clean_mask, scale)

    # 4. Custom CUDA kernels apply the LSE rule above in FP32.
    numerator = O_clean.float()
    m = LSE_clean.float()
    normalizer = isfinite(m).float()
    launch(merge_lse_kernel, numerator, m, normalizer,
           O_block, LSE_block)
    O, LSE = empty_outputs(Q)
    launch(normalize_output_kernel,
           numerator, m, normalizer, O, LSE)
    save_for_backward(Q, K_block, V_block, K_clean, V_clean, O, LSE)
    return O
\end{lstlisting}

\clearpage
Implementation~\ref{impl:csbp-backward} shows the fused packed backward. One
masked kernel reuses the saved output and LSE to compute gradients for the local
corrupted K/V and gathered clean K/V together. The corrupted gradients remain
local, while only the clean K/V gradients are reduce-scattered to their CP
ranks.

\begin{lstlisting}[
  style=implementation,
  language=Python,
  caption={Fused CSBP backward. One masked kernel computes the local corrupted-block and clean-context gradients.},
  label={impl:csbp-backward}
]
def csbp_fused_backward(dO, saved, masks, scale):
    Q, K_block, V_block, K_clean, V_clean, O, LSE = saved

    # 1. Pack local corrupted K/V before the gathered clean K/V.
    K_packed, V_packed = pack_local_clean_kv(
        K_block, V_block, K_clean, V_clean)

    # 2. One masked backward reuses the exact forward output and LSE.
    dQ, dK_packed, dV_packed = masked_attention_backward(
        Q, K_packed, V_packed, O, LSE, dO,
        masks.packed, scale)

    # 3. Split local gradients from the clean gradients that communicate.
    dK_block, dK_clean = split_local_clean_gradients(
        dK_packed, masks.corrupted_rows)
    dV_block, dV_clean = split_local_clean_gradients(
        dV_packed, masks.corrupted_rows)

    # 4. Return each rank's clean K/V-gradient shard.
    dK_clean_r, dV_clean_r = reduce_scatter_clean_gradients(
        dK_clean, dV_clean)
    return dQ, dK_block, dV_block, dK_clean_r, dV_clean_r
\end{lstlisting}

\section{AR-to-BDLM Conversion Methodology}
\label{app:fast-dllm-v2-objective}

Fast-dLLM v2 adapts a pretrained causal LM to block diffusion while retaining
next-token prediction~\citep{wu2025fastdllmv2}. It samples a blockwise binary
mask $m^{(0)}$ and constructs a second noisy view with the complementary mask
$m^{(1)}=1-m^{(0)}$ on every supervised token. If $h_i^{(v)}$ is the hidden
representation at position $i$ in view $v\in\{0,1\}$, let $m_i^{(v)}=1$ when
$x_i$ is masked and let $\mathcal{S}$ contain the supervised positions other
than the first token. The sampled objective is
\begin{equation}
    \widehat{\mathcal{L}}_{\mathrm{AR\mbox{-}to\mbox{-}BDLM}}
    =
    -\sum_{v=0}^{1}\sum_{i\in\mathcal{S}}
    m_i^{(v)}\log p_\theta\!\left(x_i\mid h_{i-1}^{(v)}\right),
    \qquad
    m_i^{(0)}+m_i^{(1)}=1.
    \label{eq:fast-dllm-v2-objective}
\end{equation}
The complementary views supervise every eligible next token exactly once.
The one-token shift uses the preceding query row to predict $x_i$, matching a
causal LM head.

The objective preserves the two properties used by block parallelism. Each
view uses the length-$2L$ clean-plus-corrupted computation, and queries in
block $b$ use corrupted K/V only from block $b$ of that view and clean K/V
from preceding clean blocks,
\begin{equation}
    O_{b,v}
    =\operatorname{Attn}\!\left(
      Q_{b,v},
      [K_{b,v};K_{<b,v}^{\mathrm{clean}}],
      [V_{b,v};V_{<b,v}^{\mathrm{clean}}]
    \right).
    \label{eq:fast-dllm-v2-attention}
\end{equation}
The two views are separate batch elements and introduce no cross-view
attention. The label shift also leaves these attention dependencies unchanged.
We assign each shifted loss term to the block containing its prediction row,
and group the terms from both views into the corresponding block loss. The
objective therefore retains the decomposition
$\widehat{\mathcal{L}}_{\mathrm{AR\mbox{-}to\mbox{-}BDLM}}
=\sum_{b=1}^{B}\mathcal{L}_b$.

CSBP therefore applies directly. BP assigns each complete corrupted
computation and loss for block $b$ in both views to one rank, while CP shards
the clean sequence of both views across the same ranks.
Corrupted Q/K/V and their gradients remain on the block owner, while clean K/V
gradients are accumulated on their CP shard owners.
Proposition~\ref{prop:csbp-exactness}
applies to each view, and summing the complementary-view losses preserves its
loss and gradient equalities.

\clearpage
\section{DFlash2 Drafter Training Methodology}
\label{app:dflash2-objective}

DFlash trains a lightweight block-diffusion drafter for speculative
decoding~\citep{chen2026dflash}. Given a clean sequence $x$, hidden features
$H^{\mathrm{tar}}$ are extracted from a frozen autoregressive target model.
For each sampled block $b$, the token at position $a_b$ is visible and the next
$M-1$ tokens are masked and predicted in one pass. Queries attend
bidirectionally within the block and to target-feature K/V preceding $a_b$;
different draft blocks do not attend to one another.

Let
$\widetilde{x}^{(b)}=(x_{a_b},\mathtt{[MASK]},\ldots,\mathtt{[MASK]})$
denote the input for block $b$. One draft-attention layer computes
\begin{equation}
    O_b
    =
    \operatorname{Attn}\!\left(
      Q_b,
      [K_b;K_{<a_b}^{\mathrm{tar}}],
      [V_b;V_{<a_b}^{\mathrm{tar}}]
    \right).
    \label{eq:dflash2-block-attention}
\end{equation}
Following DFlash2~\citep{inco2026dflash2}, our drafter uses the architecture in
the open-source SpecForge implementation~\citep{specforge2025}. It retains this
one-pass backbone and uses block-local two-tap dynamic convolutions and a
candidate-path selector. Its sampled objective is the sum of a draft-token loss
and a selector loss for each block:
\begin{equation}
    \widehat{\mathcal{L}}_{\mathrm{DFlash2}}
    = \frac{1}{Z}\sum_b \mathcal{L}_b,
    \qquad
    \mathcal{L}_b
    = \mathcal{L}^{\mathrm{draft}}_b
      + \alpha_{\mathrm{sel}}\mathcal{L}^{\mathrm{sel}}_b,
    \label{eq:dflash2-training-objective}
\end{equation}
where $Z$ is the sum of valid token weights and $\alpha_{\mathrm{sel}}$ weights
the selector term.

This blockwise objective makes CSBP directly applicable. Let
$\mathcal B_r$ be the blocks assigned to rank $r$; these sets are disjoint and
together contain every sampled block. Since $\mathcal L_b$ depends only on block
$b$ and its target-feature prefix, each rank evaluates its assigned terms, and
\begin{equation}
    \sum_{r=1}^{P}
    \nabla_\theta\!\left(
      \frac{1}{Z}\sum_{b\in\mathcal B_r}\mathcal L_b
    \right)
    = \nabla_\theta\widehat{\mathcal L}_{\mathrm{DFlash2}}.
    \label{eq:dflash2-gradient-decomposition}
\end{equation}
Thus, assigning complete blocks to ranks preserves the loss and gradient up to
floating-point reduction order. It also gives DFlash2 an additional source of
savings beyond full-model SFT and AR-to-BDLM conversion. With $P$ context
ranks, conventional CP repeats each block-local drafter path $P$ times, while
CSBP evaluates it once. Full-model training has no separate drafter path and
therefore no corresponding $P$-to-one reduction. The repeated DFlash2 work
includes the decoder, vocabulary objective, output-head recomputation, and
selector; target-feature processing remains context sharded. At 512K, removing
two-way replication reduces measured GEMM work by $1.92\times$, and higher
execution efficiency raises the throughput gain to $2.48\times$. At 1M,
removing eight-way replication permits four data-parallel replicas, producing
the $7.59\times$ aggregate throughput gain.

\section{DiffusionGemma SFT Training Data and Evaluation Details}
\label{app:training-evaluation-details}

\paragraph{SWE-bench Verified.}
For the software-engineering comparison in
Figure~\ref{fig:equal-wall-clock-downstream}, the best baseline and CSBP
train on identical
ordered streams of successful CoderForge-Preview trajectories
\citep{CoderForge2026}, formatted for the OpenHands agent
interface~\citep{wang2025openhands}. Trajectories in the training stream have
a mean sequence length of 65{,}536.5 tokens and a maximum length of
131{,}072 tokens.
We evaluate the base model at 0 hours and
checkpoints at 3, 6, 9, and 12 hours, with one attempt on each of the 500
SWE-bench Verified tasks~\citep{jimenez2024swebench,chowdhury2024swebenchverified}.
The harness supplies the issue and repository state; success requires both the
designated \texttt{FAIL\_TO\_PASS} and \texttt{PASS\_TO\_PASS} tests to pass.

\paragraph{Terminal-Bench Lite.}
The best baseline and CSBP train on identical ordered streams of quality-filtered Terminus-2
trajectories from
LiteCoder-Terminal-SFT~\citep{peng2026litecoderterminal}. Trajectories in the
training stream have a mean sequence length of 65{,}551 tokens and a maximum
length of 131{,}072 tokens. We evaluate the same
checkpoint schedule with Harbor's Terminus-2 agent~\citep{Harbor_Framework}
on all 100 OpenThoughts-TBLite tasks~\citep{OpenThoughts-TBLite}, with one
attempt per task. A task passes only when its benchmark verifier returns success.

\paragraph{Matched wall-clock protocol.}
Within each workload, the best baseline and CSBP use the same model, training corpus,
optimizer settings, checkpoint schedule, and evaluation configuration; only
the distributed training method changes. We report pass rate as the percentage
of tasks solved by the single attempt. Evaluation tasks and verifier outputs do
not contribute training gradients. Both sources apply benchmark decontamination:
CoderForge excludes matching SWE-bench Verified repository/commit pairs and
problem statements, while LiteCoder-Terminal-SFT filters 13-gram overlaps with
Terminal-Bench queries~\citep{CoderForge2026,peng2026litecoderterminal}.

\paragraph{SFT optimization.}
The CoderForge-Preview and LiteCoder-Terminal-SFT runs use a global batch size
of one example and one gradient-accumulation step per update.
We train rank-16 LoRA adapters with $\alpha=32$ on eight H100 GPUs. The
baseline uses CP8, while CSBP uses CP8/BP8.
We optimize with DeepSpeed ZeRO-2 and FusedAdam using a peak learning rate of
$10^{-5}$, Adam coefficients $(\beta_1,\beta_2)=(0.95,0.99)$,
$\epsilon=10^{-8}$, weight decay $10^{-4}$, gradient clipping at $1.0$, and a
cosine schedule that decays the learning rate to $10^{-6}$.

\section{End-to-End Numerical Agreement}
\label{app:numerical-agreement}

Table~\ref{tab:objective-equivalence-256k} compares the strongest non-BP
baseline with CSBP using full-depth BF16 runs at 256K context on sixteen H200
GPUs.  The
NemotronDiffusion and DiffusionGemma rows evaluate BDLM fine-tuning, while the
Qwen rows evaluate Fast-dLLM v2 AR-to-BDLM conversion.  Each pair uses the same
model, random seed, generated inputs, objective, and optimizer configuration.
The resulting loss differences are small and consistent with floating-point
reduction order.

\begin{table}[H]
  \centering
  \caption{End-to-end loss agreement between the strongest non-BP baseline and CSBP at 256K context.}
  \label{tab:objective-equivalence-256k}
  \small
  \setlength{\tabcolsep}{5pt}
  \renewcommand{\arraystretch}{1.12}
  \resizebox{\linewidth}{!}{%
  \begin{tabular}{@{}lcllrr@{}}
    \toprule
    Model & Context & \shortstack{Best baseline\\topology} & \shortstack{Ours\\topology} & \shortstack{Absolute loss\\difference} & \shortstack{Relative loss\\difference} \\
    \midrule
    NemotronDiffusion 3B
      & 256K & DP4/TP1/CP4 & DP4/TP1/CP4/BP4
      & $2.83\times10^{-3}$ & $2.11\times10^{-4}$ \\
    NemotronDiffusion 8B
      & 256K & DP4/TP1/CP4 & DP4/TP1/CP4/BP4
      & $2.59\times10^{-3}$ & $1.97\times10^{-4}$ \\
    NemotronDiffusion 14B
      & 256K & DP2/TP1/CP8 & DP2/TP1/CP8/BP8
      & $9.54\times10^{-7}$ & $7.18\times10^{-8}$ \\
    DiffusionGemma 26B-A4B
      & 256K & DP1/TP1/EP2/CP8 & DP1/TP1/EP2/CP8/BP8
      & $6.51\times10^{-3}$ & $2.11\times10^{-4}$ \\
    Qwen3.8-27B
      & 256K & DP1/TP2/CP8 & DP1/TP2/CP8/BP8
      & $7.21\times10^{-6}$ & $5.11\times10^{-7}$ \\
    Qwen3.5-27B
      & 256K & DP1/TP2/CP8 & DP1/TP2/CP8/BP8
      & $9.50\times10^{-5}$ & $6.69\times10^{-6}$ \\
    \bottomrule
  \end{tabular}}
\end{table}

\section{Pure Block Parallelism Ablation}
\label{app:pure-bp-ablation}

Sharding the clean context makes BP practical at long sequence lengths
(Table~\ref{tab:pure-bp-ablation}). For NemotronDiffusion 14B at 64K, CSBP is
$1.46\times$ faster than pure BP and uses 89.8 rather than 110.7\,GiB of HBM.
At 128K, CSBP still fits while pure BP runs out of memory. DiffusionGemma 26B-A4B
shows the same need for context sharding even earlier: pure BP runs out of
memory at both 64K and 128K, where CSBP fits and outperforms pure CP. Block ownership removes corrupted
K/V communication; context sharding prevents replicated clean prefixes from
erasing that benefit.

\begin{table}[H]
  \centering
  \caption{Context sharding enables block parallelism at long contexts.}
  \label{tab:pure-bp-ablation}
  \footnotesize
  \setlength{\tabcolsep}{1.75pt}
  \renewcommand{\arraystretch}{1.08}
  \ifdefined\iclrfinalcopy
    \def\fitpurebptable#1{\resizebox{\linewidth}{!}{#1}}
  \else
    \def\fitpurebptable#1{#1}
  \fi
  \fitpurebptable{%
  \begin{tabular}{lllccrr}
    \toprule
    Model & Context & Method & Topology & Global batch & Tok/s & Peak HBM (GiB) \\
    \midrule
    NemotronDiffusion 14B & 64K & Best baseline & DP8/TP1/CP2 & 8 & 20{,}092 & 89.5 \\
                  &     & Pure BP            & DP8/TP1/BP2 & 8 & 15{,}223 & 110.7 \\
                  &     & Ours               & DP8/TP1/CP2/BP2 & 8 & \textbf{22{,}152} & 89.8 \\
    \addlinespace
    NemotronDiffusion 14B & 128K & Best baseline & DP4/TP1/CP4 & 4 & 13{,}114 & 90.0 \\
                  &      & Pure BP            & DP4/TP1/BP4 & 4 & OOM & OOM \\
                  &      & Ours               & DP4/TP1/CP4/BP4 & 4 & \textbf{15{,}136} & 90.1 \\
    \midrule
    DiffusionGemma 26B-A4B & 64K & Best baseline & DP4/TP1/EP2/CP2 & 8 & 22{,}154 & 109.4 \\
                   &     & Pure BP            & DP4/TP1/EP2/BP2 & 8 & OOM & OOM \\
                   &     & Ours               & DP4/TP1/EP2/CP2/BP2 & 8 & \textbf{27{,}653} & 113.5 \\
    \addlinespace
    DiffusionGemma 26B-A4B & 128K & Best baseline & DP2/TP1/EP2/CP4 & 4 & 14{,}584 & 112.0 \\
                   &      & Pure BP            & DP2/TP1/EP2/BP4 & 4 & OOM & OOM \\
                   &      & Ours               & DP2/TP1/EP2/CP4/BP4 & 4 & \textbf{21{,}049} & 113.6 \\
    \bottomrule
  \end{tabular}}

\end{table}

\section{Load Balancing Analysis}
\label{app:dual-end-scheduling-ablation}

Load balancing delivers substantial gains without increasing memory use
(Table~\ref{tab:dual-end-scheduling-ablation}). Replacing contiguous block
assignment with early--late pairing improves throughput by $1.34\times$ for
NemotronDiffusion 3B and $1.28\times$ for DiffusionGemma 26B-A4B at 256K. Peak
HBM is unchanged for NemotronDiffusion and falls by 6.5\,GiB for DiffusionGemma.
Only the block assignment changes; the objective and
remaining training settings are fixed. These gains show that equal block
counts are not enough: balancing attention work across ranks is important
to realizing CSBP's throughput benefit.

\begin{table}[H]
  \centering
  \caption{Load balancing accelerates CSBP at 256K context.}
  \label{tab:dual-end-scheduling-ablation}
  \small
  \setlength{\tabcolsep}{4pt}
  \renewcommand{\arraystretch}{1.35}
  \resizebox{\linewidth}{!}{%
  \begin{tabular}{@{}llcrrrrr@{}}
    \toprule
    Model & Topology & \shortstack{Global\\batch} & ms/step & Tok/s & \shortstack{MFU\\(\%)} & \shortstack{Peak HBM\\(GiB)} & Speedup \\
    \midrule
    NemotronDiffusion 3B & DP4/TP1/CP4/BP4 & 4 &
    88{,}950.4 / \textbf{66{,}500.3} &
    11{,}788 / \textbf{15{,}768} & 27.8 / \textbf{37.2} &
    57.0 / 57.0 &
    \textbf{1.34$\times$} \\
    \midrule
    DiffusionGemma 26B-A4B & DP1/TP1/EP2/CP8/BP8 & 2 &
    43{,}722.0 / \textbf{34{,}134.6} &
    11{,}991 / \textbf{15{,}359} & 14.4 / \textbf{18.4} &
    120.1 / \textbf{113.6} & \textbf{1.28$\times$} \\
    \bottomrule
  \end{tabular}}
  \par\vspace{2pt}
  \parbox{\linewidth}{\footnotesize\raggedright Pairs: contiguous~/~dual-end. Speedup: dual-end~/~contiguous. Bold marks the better value.}
\end{table}

\section{AR-to-BDLM Conversion Results}
\label{app:fast-dllm-v2-results}

CSBP's benefits extend to converting autoregressive models into BDLMs
(Table~\ref{tab:fast-dllm-v2-context-scaling}). Under the Fast-dLLM v2
objective, both Qwen backbones train faster and use less peak HBM at every
evaluated context length. Speedup grows from $1.26$--$1.27\times$ at 64K to
$1.27$--$1.33\times$ at 256K, where CSBP also saves 9.6\,GiB of HBM.

\begin{table}[H]
  \centering
  \caption{CSBP accelerates AR-to-BDLM conversion from 64K to 256K context.}
  \label{tab:fast-dllm-v2-context-scaling}
  \small
  \setlength{\tabcolsep}{4pt}
  \renewcommand{\arraystretch}{1.35}
  \resizebox{\linewidth}{!}{%
  \begin{tabular}{@{}llllrrrr@{}}
    \toprule
    Model & Context & \shortstack{Best baseline\\topology} & \shortstack{Ours\\topology} & Tok/s & Speedup & \shortstack{MFU\\(\%)} & \shortstack{Peak HBM\\(GiB)} \\
    \midrule
    Qwen3.8-27B & 64K  & DP1/TP2/CP8 & DP1/TP2/CP8/BP8 & 4{,}697 / \textbf{5{,}905} & \textbf{1.26$\times$} & 22.3 / \textbf{28.0} & 64.2 / \textbf{62.5} \\
    Qwen3.8-27B & 128K & DP1/TP2/CP8 & DP1/TP2/CP8/BP8 & 3{,}742 / \textbf{4{,}853} & \textbf{1.30$\times$} & 21.4 / \textbf{27.8} & 83.0 / \textbf{78.3} \\
    Qwen3.8-27B & 256K & DP1/TP2/CP8 & DP1/TP2/CP8/BP8 & 2{,}623 / \textbf{3{,}490} & \textbf{1.33$\times$} & 20.1 / \textbf{26.8} & 120.7 / \textbf{111.1} \\
    \midrule
    Qwen3.5-27B & 64K  & DP1/TP2/CP8 & DP1/TP2/CP8/BP8 & 4{,}709 / \textbf{5{,}967} & \textbf{1.27$\times$} & 22.3 / \textbf{28.3} & 64.2 / \textbf{62.5} \\
    Qwen3.5-27B & 128K & DP1/TP2/CP8 & DP1/TP2/CP8/BP8 & 3{,}722 / \textbf{4{,}850} & \textbf{1.30$\times$} & 21.3 / \textbf{27.8} & 83.0 / \textbf{78.3} \\
    Qwen3.5-27B & 256K & DP1/TP2/CP8 & DP1/TP2/CP8/BP8 & 2{,}641 / \textbf{3{,}351} & \textbf{1.27$\times$} & 20.3 / \textbf{25.7} & 120.7 / \textbf{111.1} \\
    \bottomrule
  \end{tabular}}
  \par\vspace{2pt}
  \parbox{\linewidth}{\footnotesize\raggedright Pairs: best baseline~/~ours. Bold marks the better value.}
\end{table}

\section{Effectiveness Across Different Block Sizes}
\label{app:block-size-sensitivity}

CSBP maintains its advantage across an eightfold range of block sizes
(Table~\ref{tab:block-size-sensitivity}). From 128 to 1,024 tokens per block,
speedup stays within $1.14$--$1.15\times$ for NemotronDiffusion 14B and
$1.33$--$1.44\times$ for DiffusionGemma 26B-A4B.

\begin{table}[H]
  \centering
  \caption{CSBP sustains speedups across block sizes at 128K context.}
  \label{tab:block-size-sensitivity}
  \small
  \setlength{\tabcolsep}{5pt}
  \renewcommand{\arraystretch}{1.35}
  \resizebox{\linewidth}{!}{%
  \begin{tabular}{@{}lrllcrrrrr@{}}
    \toprule
    Model & \shortstack{Block\\size} & \shortstack{Best\\baseline\\topology} & \shortstack{Ours\\topology} & \shortstack{Global\\batch} & ms/step & Tok/s & \shortstack{MFU\\(\%)} & \shortstack{Peak HBM\\(GiB)} & Speedup \\
    \midrule
    NemotronDiffusion 14B & 128 & DP4/TP1/CP4 & DP4/TP1/CP4/BP4 & 4 &
    39{,}187.3 / \textbf{34{,}279.4} & 13{,}379 / \textbf{15{,}295} &
    34.3 / \textbf{39.2} & \textbf{90.0} / 90.1 & \textbf{1.14$\times$} \\
    NemotronDiffusion 14B & 256 & DP4/TP1/CP4 & DP4/TP1/CP4/BP4 & 4 &
    39{,}979.2 / \textbf{34{,}637.4} & 13{,}114 / \textbf{15{,}136} &
    33.7 / \textbf{38.9} & \textbf{90.0} / 90.1 & \textbf{1.15$\times$} \\
    NemotronDiffusion 14B & 512 & DP4/TP1/CP4 & DP4/TP1/CP4/BP4 & 4 &
    38{,}852.1 / \textbf{34{,}197.3} & 13{,}494 / \textbf{15{,}331} &
    34.7 / \textbf{39.4} & \textbf{90.0} / 90.1 & \textbf{1.14$\times$} \\
    NemotronDiffusion 14B & 1{,}024 & DP4/TP1/CP4 & DP4/TP1/CP4/BP4 & 4 &
    39{,}201.0 / \textbf{34{,}392.4} & 13{,}374 / \textbf{15{,}244} &
    34.5 / \textbf{39.3} & \textbf{90.0} / 90.1 & \textbf{1.14$\times$} \\
    \midrule
    DiffusionGemma 26B-A4B & 128 & DP2/TP1/EP2/CP4 & DP2/TP1/EP2/CP4/BP4 & 4 &
    33{,}271.3 / \textbf{24{,}844.4} & 15{,}758 / \textbf{21{,}103} &
    12.6 / \textbf{16.9} & \textbf{111.8} / 113.6 & \textbf{1.34$\times$} \\
    DiffusionGemma 26B-A4B & 256 & DP2/TP1/EP2/CP4 & DP2/TP1/EP2/CP4/BP4 & 4 &
    35{,}949.5 / \textbf{24{,}907.5} & 14{,}584 / \textbf{21{,}049} &
    11.7 / \textbf{16.9} & \textbf{112.0} / 113.6 & \textbf{1.44$\times$} \\
    DiffusionGemma 26B-A4B & 512 & DP2/TP1/EP2/CP4 & DP2/TP1/EP2/CP4/BP4 & 4 &
    33{,}319.8 / \textbf{24{,}829.4} & 15{,}735 / \textbf{21{,}116} &
    12.7 / \textbf{17.0} & \textbf{112.1} / 113.7 & \textbf{1.34$\times$} \\
    DiffusionGemma 26B-A4B & 1{,}024 & DP2/TP1/EP2/CP4 & DP2/TP1/EP2/CP4/BP4 & 4 &
    33{,}033.5 / \textbf{24{,}850.7} & 15{,}871 / \textbf{21{,}097} &
    12.9 / \textbf{17.1} & \textbf{112.3} / 113.7 & \textbf{1.33$\times$} \\
    \bottomrule
  \end{tabular}}
  \par\vspace{2pt}
  \parbox{\linewidth}{\footnotesize\raggedright Pairs: best baseline~/~ours. Bold marks the better value.}
\end{table}

\section{Savings Breakdowns}
\label{app:sources-of-speedup}

Table~\ref{tab:operator-profile-256k} separates attention communication from
end-to-end step time. NemotronDiffusion 14B uses full attention, so keeping
corrupted K/V local halves its logical attention traffic; overlap limits the
reduction in exposed attention communication to 20.4\%. DiffusionGemma 26B-A4B
has a second, architecture-specific source of savings: 25 of its 30 attention
layers use a 1,024-token sliding window. The tested CP8 baseline uses full-shard
K/V exchange in all 30 layers. CSBP keeps corrupted K/V local and, in the 25
sliding-window layers, its window-aware exchange sends only the clean rows
needed by local queries; only the five global-attention layers exchange
complete clean K/V shards. This reduces logical attention traffic by 93.5\%
and exposed attention communication by 93.4\%.

\begin{table}[H]
  \centering
  \caption{Attention and step-time breakdown at 256K: CP8 baseline vs. CSBP.}
  \label{tab:operator-profile-256k}
  \small
  \setlength{\tabcolsep}{3pt}
  \renewcommand{\arraystretch}{1.12}
  \begin{tabular}{lcc}
    \toprule
    Measure & \shortstack{NemotronDiffusion 14B\\CP8 $\rightarrow$ CP8/BP8} & \shortstack{DiffusionGemma 26B-A4B\\CP8 $\rightarrow$ CP8/BP8} \\
    \midrule
    Step time (s)
      & 66.50 $\rightarrow$ 56.37 \textcolor{green!50!black}{\textbf{($1.18\times$)}}
      & 49.66 $\rightarrow$ 34.13 \textcolor{green!50!black}{\textbf{($1.45\times$)}} \\
    Forward GPU span (s)
      & 17.57 $\rightarrow$ 13.00 \textcolor{green!50!black}{\textbf{($-26.0\%$)}}
      & 10.99 $\rightarrow$ 5.83 \textcolor{green!50!black}{\textbf{($-47.0\%$)}} \\
    Backward GPU span (s)
      & 49.46 $\rightarrow$ 45.02 \textcolor{green!50!black}{\textbf{($-9.0\%$)}}
      & 36.50 $\rightarrow$ 26.62 \textcolor{green!50!black}{\textbf{($-27.1\%$)}} \\
    Attention traffic (GiB/GPU)
      & 210.00 $\rightarrow$ 105.00 \textcolor{green!50!black}{\textbf{($-50.0\%$)}}
      & 433.13 $\rightarrow$ 28.30 \textcolor{green!50!black}{\textbf{($-93.5\%$)}} \\
    Exposed attention comm. (s/GPU)
      & 3.034 $\rightarrow$ 2.414 \textcolor{green!50!black}{\textbf{($-20.4\%$)}}
      & 10.939 $\rightarrow$ 0.723 \textcolor{green!50!black}{\textbf{($-93.4\%$)}} \\
    \bottomrule
  \end{tabular}
\end{table}

The 93\% reductions apply to attention communication, not the complete
training step. DiffusionGemma's step time falls by 31.3\%, while dense and
expert GEMM durations remain nearly unchanged. Both forward and backward
become faster, yielding $1.18\times$ and $1.45\times$ end-to-end speedup for
NemotronDiffusion 14B and DiffusionGemma 26B-A4B, respectively.

\section{Complete H200 Profiles}
\label{app:h200-profiles}

CSBP improves the throughput--memory trade-off across the tested models and
context lengths (Table~\ref{tab:h200-two-node-1p2}). The independently selected
configurations deliver speedups up to $1.61\times$. Even when throughput is
nearly tied, context sharding can substantially reduce memory: for NemotronDiffusion
3B at 128K, CSBP stays within 1\% of the DP16 baseline's throughput while
saving 41\,GiB of peak HBM. This case highlights another practical benefit
of CSBP: retaining throughput while freeing memory.

\begin{table}[H]
  \centering
  \caption{Throughput and memory gains across long-context training configurations.}
  \label{tab:h200-two-node-1p2}
  \small
  \setlength{\tabcolsep}{5pt}
  \renewcommand{\arraystretch}{1.15}
  \resizebox{\linewidth}{!}{%
  \begin{tabular}{@{}llllcrrrrr@{}}
    \toprule
    Model & Context & \shortstack{Ours\\topology} & \shortstack{Best\\baseline\\topology} & \shortstack{Global\\batch} & Tok/s & \shortstack{MFU\\(\%)} & \shortstack{Peak HBM\\(GiB)} & \shortstack{HBM saved\\(GiB)} & Speedup \\
    \midrule
    NemotronDiffusion 3B & 128K & DP8/TP1/CP2/BP2 & DP16/TP1 & 8 / 16 & 28{,}891 / 29{,}086 & 37.5 / 37.7 & 56.5 / 97.5 & 41.0 & 0.99$\times$ \\
    NemotronDiffusion 3B & 256K & DP4/TP1/CP4/BP4 & DP4/TP1/CP4 & 4 & 16{,}027 / 13{,}434 & 37.8 / 31.7 & 57.0 / 57.1 & 0.1 & \textbf{1.19$\times$} \\
    NemotronDiffusion 3B & 512K & DP2/TP1/CP8/BP8 & DP2/TP1/CP8 & 2 & 8{,}352 / 6{,}895 & 37.4 / 30.8 & 58.1 / 61.7 & 3.6 & \textbf{1.21$\times$} \\
    \midrule
    NemotronDiffusion 8B & 64K  & DP8/TP1/CP2/BP2 & DP8/TP1/CP2 & 16 & 29{,}829 / 26{,}669 & 37.8 / 33.8 & 96.9 / 96.4 & $-0.5$ & \textbf{1.12$\times$} \\
    NemotronDiffusion 8B & 128K & DP8/TP1/CP2/BP2 & DP8/TP1/CP2 & 8  & 19{,}653 / 16{,}580 & 38.5 / 32.5 & 96.9 / 96.4 & $-0.5$ & \textbf{1.19$\times$} \\
    NemotronDiffusion 8B & 256K & DP4/TP1/CP4/BP4 & DP4/TP1/CP4 & 4  & 11{,}379 / 9{,}668 & 38.0 / 32.3 & 97.5 / 97.5 & 0.0 & \textbf{1.18$\times$} \\
    NemotronDiffusion 8B & 512K & DP2/TP1/CP8/BP8 & DP2/TP1/CP8 & 2  & 6{,}254 / 5{,}202 & 38.2 / 31.8 & 98.5 / 99.5 & 1.0 & \textbf{1.20$\times$} \\
    \midrule
    NemotronDiffusion 14B & 64K  & DP8/TP1/CP2/BP2 & DP8/TP1/CP2 & 8 & 22{,}152 / 20{,}092 & 38.8 / 35.2 & 89.8 / 89.5 & $-0.3$ & \textbf{1.10$\times$} \\
    NemotronDiffusion 14B & 128K & DP4/TP1/CP4/BP4 & DP4/TP1/CP4 & 4 & 15{,}136 / 13{,}114 & 38.9 / 33.7 & 90.1 / 90.0 & $-0.1$ & \textbf{1.15$\times$} \\
    NemotronDiffusion 14B & 256K & DP2/TP1/CP8/BP8 & DP2/TP1/CP8 & 2 & 9{,}301 / 7{,}884 & 39.0 / 33.1 & 90.6 / 91.1 & 0.5 & \textbf{1.18$\times$} \\
    NemotronDiffusion 14B & 512K & DP1/TP2/CP8/BP8 & DP1/TP2/CP8 & 1 & 5{,}052 / 4{,}242 & 37.7 / 31.6 & 71.9 / 72.7 & 0.8 & \textbf{1.19$\times$} \\
    \midrule
    DiffusionGemma 26B-A4B & 64K  & DP4/TP1/EP2/CP2/BP2 & DP4/TP1/EP2/CP2 & 8 & 27{,}653 / 22{,}154 & 16.0 / 12.8 & 113.5 / 109.4 & $-4.1$ & \textbf{1.25$\times$} \\
    DiffusionGemma 26B-A4B & 128K & DP2/TP1/EP2/CP4/BP4 & DP2/TP1/EP2/CP4 & 4 & 21{,}049 / 14{,}584 & 16.9 / 11.7 & 113.6 / 112.0 & $-1.6$ & \textbf{1.44$\times$} \\
    DiffusionGemma 26B-A4B & 256K & DP1/TP1/EP2/CP8/BP8 & DP1/TP1/EP2/CP8 & 2 & 15{,}359 / 10{,}557 & 18.4 / 12.7 & 113.6 / 118.1 & 4.6 & \textbf{1.45$\times$} \\
    DiffusionGemma 26B-A4B & 512K & DP1/TP2/CP8/BP8/SP & DP1/TP2/CP8/SP & 1 & 9{,}815 / 6{,}079 & 20.3 / 12.6 & 106.5 / 127.4 & 20.9 & \textbf{1.61$\times$} \\
    \bottomrule
  \end{tabular}}
  \par\vspace{2pt}
  \parbox{\linewidth}{\footnotesize\raggedright Pairs: ours~/~best baseline. HBM saved: best baseline minus ours.}
\end{table}

\end{document}